\theoremstyle{plain}
\theoremstyle{definition}
\theoremstyle{remark}
\DeclareMathAlphabet\mathbb{U}{msb}{m}{n}
\def\Rset{\mathbb{R}}
\DeclareMathOperator*{\E}{\mathbb E}
\DeclareMathOperator{\sign}{sign}
\DeclarePairedDelimiter{\abs}{\lvert}{\rvert} 
\DeclarePairedDelimiter{\bracket}{[}{]}
\DeclarePairedDelimiter{\curl}{\{}{\}}
\DeclarePairedDelimiter{\norm}{\|}{\|}
\DeclarePairedDelimiter{\paren}{(}{)}
\DeclarePairedDelimiter{\tri}{\langle}{\rangle}
\newcommand{\sC}{{\mathscr C}}
\newcommand{\sD}{{\mathscr D}}
\newcommand{\sH}{{\mathscr H}}
\newcommand{\sM}{{\mathscr M}}
\newcommand{\sP}{{\mathscr P}}
\newcommand{\sR}{{\mathscr R}}
\newcommand{\sX}{{\mathscr X}}
\newcommand{\sY}{{\mathscr Y}}
\newcommand{\sfL}{{\mathsf L}}
\newcommand{\ov}{\overline}
\newcommand{\wt}{\widetilde}
\newcommand{\e}{\epsilon}
\newcommand{\ignore}[1]{}
\newcommand{\1}{\mathds{1}}
\newcommand{\lrank}{{\sfL_{0-1}}}
\newcommand{\lrankbi}{{\wt \sfL_{0-1}}}
\newcommand{\lbi}{{\wt \sfL}}
\newcommand{\labs}{{\sfL_{0-1}^{\rm{abs}}}}
\newcommand{\labsbi}{{\wt \sfL_{0-1}^{\rm{abs}}}}
\begin{document}

\title{Ranking with Abstention}

\author{\name Anqi Mao \email aqmao@cims.nyu.edu \\
       \addr Courant Institute of Mathematical Sciences, New York%
       \AND
       \name Mehryar Mohri \email mohri@google.com \\
       \addr Google Research and Courant Institute of Mathematical Sciences, New York%
       \AND
       \name Yutao Zhong \email yutao@cims.nyu.edu \\
       \addr Courant Institute of Mathematical Sciences, New York%
       }

\editor{}

\maketitle

\begin{abstract}
We introduce a novel framework of \emph{ranking with abstention}, where the learner can abstain from making
prediction at some
limited cost $c$. We present a extensive theoretical analysis of this framework including a series of \emph{$\sH$-consistency bounds} for both the
family of linear functions and that of neural networks with one
hidden-layer. These theoretical guarantees are the state-of-the-art consistency guarantees in the literature, which are upper bounds on the target loss
estimation error of a predictor in a hypothesis set $\sH$, expressed
in terms of the surrogate loss estimation error of that predictor. We further argue that our proposed abstention
methods are important when using common equicontinuous hypothesis sets
in practice. We report the results of experiments illustrating the effectiveness of
ranking with abstention.
\end{abstract}

\section{Introduction}
\label{sec:intro}

In many applications, ranking is a more appropriate formulation of the
learning task than classification, given the crucial significance of
the ordering of the items.  As an example, for
movie recommendation systems, an ordered list of movies is preferable
to a comprehensive list of recommended titles, since users
are more likely to watch those ranked highest.

The problem of learning to rank has been studied in a large number of
publications. The work by \citet{ailon2008efficient,AilonMohri2010} distinguishes
two general formulations of the problem: the \emph{score-based
setting} and the \emph{preference-based setting}. In the score-based
setting, a real-valued function over the input space is learned, whose
values determine a total ordering of all input points. In the
preference-based setting, a pairwise preference function is first
learned, typically by training a classifier over a sample of labeled
pairs; next, that function is used to derive an ordering, potentially
randomized, of any subset of points.

This paper deals with the score-based ranking formulation both in the
general ranking setting, where items are not assigned any specific
category, and the bipartite setting, where they are labeled with one
of two classes. The evaluation of a ranking solution in this context
is based on the average pairwise misranking metric. In the bipartite
setting, this metric is directly related to the AUC (Area Under the
ROC Curve), which coincides with the average correct pairwise ranking
\citep{Hanley1984,cortes2003auc}, also known as the
Wilcoxon-Mann-Whitney statistic.

For most hypothesis sets, directly optimizing the pairwise misranking
loss is intractable. Instead, ranking algorithms resort to a surrogate
loss. As an example, the surrogate loss for RankBoost
\citep{freund2003efficient,rudin2005margin} is based on the
exponential function and that of SVM ranking
\citep{joachims2002optimizing} on the hinge loss. But, what guarantees
can we rely on when minimizing a surrogate loss instead of the
original pairwise misranking loss?

The property often invoked in this context is \emph{Bayes
consistency}, which has been extensively studied for classification
\citep{Zhang2003,bartlett2006convexity,tewari2007consistency}. The
Bayes consistency of ranking surrogate losses has been studied in the
special case of bipartite ranking: in particular,
\citet{uematsu2017theoretically} proved the inconsistency of the
pairwise ranking loss based on the hinge loss and
\citet{gao2015consistency} gave excess loss bounds for pairwise
ranking losses based on the exponential or the logistic loss (see also \citep{menon2014bayes}).  A
related but distinct consistency question has been studied in several
publications \citep{agarwal2005generalization,kotlowski2011bipartite,agarwal2014surrogate}. It is
one with respect to binary classification, that is whether a near
minimizer of the surrogate loss of the binary classification loss is a
near minimizer of the bipartite misranking loss \citep{cortes2003auc}.

However, as recently argued by \citet*{awasthi2022Hconsistency}, Bayes
consistency is not a sufficiently informative notion since it only
applies to the entire class of measurable functions and does not hold
for specific subsets, such as sub-families of linear functions or
neural networks. Furthermore, Bayes consistency is solely an
asymptotic concept and does not offer insights into the performance of
predictors trained on finite samples.  In response, the authors
proposed an alternative concept called \emph{$\sH$-consistency
bounds}, which provide non-asymptotic guarantees tailored to a given
hypothesis set $\sH$. They proceeded to establish such bounds within
the context of classification both in binary and multi-class
classification \citep{awasthi2022Hconsistency,awasthi2022multi}, see
also \citep{mao2023cross,MaoMohriZhong2023ranking,zheng2023revisiting}. These are stronger and more informative
guarantees than Bayes consistency.

But, can we derive $\sH$-consistency bounds guarantees for ranking? We propose a novel framework of \emph{ranking with abstention}, where the learner can abstain from making
prediction at some
limited cost $c$, in both the general pairwise ranking
scenario and
the bipartite ranking scenarios. For surrogate losses of these abstention loss
functions, we give a series of $\sH$-consistency bounds for both the
family of linear functions and that of neural networks with one
hidden-layer. A key term appearing in these bounds is the
\emph{minimizability gap}, which measures the difference between the
best-in-class expected loss and the expected infimum of the pointwise
expected loss.  This plays a crucial role in these bounds and we give
a detailed analysis of these terms.

We will further show that, without abstention, deriving non-trivial
$\sH$-consistency bounds is not
possible for most hypothesis sets used in practice, including the
family of constrained linear models or that of the constrained neural
networks, or any family of equicontinuous functions with respect to
the input. In fact, we will give a relatively simple example where
the pairwise misranking error of the RankBoost algorithm remains
significant, even after training with relatively large sample
sizes. These results further imply the importance of our proposed abstention methods.

We also present the results of
experiments illustrating the effectiveness of ranking with abstention.

\textbf{Technical novelty.} The primary technical differences and challenges between the ranking and
classification settings \citep{awasthi2022Hconsistency} stem from the
fundamental distinction that ranking loss functions take as argument a
pair of samples rather than a single one, as is the case for binary
classification loss functions. This makes it more challenging to
derive $\sH$-consistency bounds, as upper bounding the calibration gap of the target loss by that of the surrogate loss becomes technically more
difficult.

Additionally, this fundamental difference leads to a negative result
for ranking, as $\sH$-consistency bounds cannot be guaranteed for most
commonly used hypothesis sets, including the family of constrained
linear models and that of constrained neural networks, both of which
satisfy the equicontinuity property concerning the input. As a result,
a natural alternative involves using ranking with abstention, for
which $\sH$-consistency bounds can be proven. In the abstention
setting, an extra challenge lies in carefully monitoring the effect of
a threshold $\gamma$ to relate the calibration gap of the target loss
to that of the surrogate loss. 

Furthermore, the bipartite ranking setting introduces an added layer
of complexity, as each element of a pair of samples has an independent
conditional distribution, which results in a more intricate
calibration gap.

\textbf{Structure of the paper.} The remaining sections of this paper
are organized as follows.  In
Section~\ref{sec:general-abs}, we study general pairwise ranking with
abstention. We provide a series of explicit $\sH$-consistency bounds
in the case of the pairwise abstention loss, with multiple choices of
the surrogate loss and for both the family of linear functions and that of neural networks with
one hidden-layer. We also study 
bipartite ranking with abstention in
Section~\ref{sec:bi-abs}. Here too, we present
$\sH$-consistency bounds for bipartite abstention loss, for linear hypothesis sets and the family of neural networks with
one hidden-layer. In Section~\ref{sec:importance}, we show the importance of our abstention methods by demonstrating that without abstention, there exists no meaningful
$\sH$-consistency bound for general surrogate loss functions with an
equicontinuous hypothesis set $\sH$, in both the general pairwise ranking (Section~\ref{sec:neg-general}) and the bipartite ranking (Section~\ref{sec:neg-bi}) scenarios.
In
Section~\ref{sec:experiments}, we report the results of experiments
illustrating the effectiveness of ranking with abstention.

We give a detailed discussion of related work in
Appendix~\ref{app:related-work}.

\section{General Pairwise Ranking with Abstention}
\label{sec:general-abs}
In this section, we introduce a novel framework of
\emph{general pairwise ranking with abstention}. We begin by introducing the
necessary definitions and concepts.

\subsection{Preliminaries}
\label{sec:pre1}
We study the learning scenario of score-based ranking in the
\emph{general pairwise ranking} scenario (e.g.\ see
\citep{MohriRostamizadehTalwalkar2018}).
Let $\sX$ denote the input space and $\sY = \curl*{-1, +1}$ the label
space. We denote by $\sH$ a hypothesis set of functions mapping from
$\sX$ to $\Rset$. The \emph{general pairwise misranking loss} $\lrank$
is defined for all $h$ in $\sH$, $x, x'$ in $\sX$ and $y$ in
$\sY$ by
\begin{align}
\label{eq:pm-loss}
\lrank(h, x, x', y) =  \mathds{1}_{y\neq \sign(h(x') - h(x))},
\end{align}
where $\sign(u) = \mathds{1}_{u \geq 0} - \mathds{1}_{u < 0}$. Thus,
$h$ incurs a loss of one on the labeled pair $(x, x', y)$ when it
ranks the pair $(x, x')$ opposite to the sign of $y$, where, by
convention, $x'$ is considered as ranked above $x$ when $h(x') \geq
h(x)$. Otherwise, the loss incurred is zero.

The framework we propose is that of
\emph{general pairwise ranking with abstention}.
In this framework, the learner abstains from making a prediction on
input pair $(x, x')$ if the distance between $x'$ and $x$ is relatively
small, in which case a cost $c$ is incurred. Let $\norm{\, \cdot \, }$
denote the norm adopted, which is typically an $\ell_p$-norm, $p  \in
[1, + \infty]$. The \emph{pairwise abstention loss} is defined as
follows for any $h \in \sH$ and $(x, x', y)  \in \sX\times\sX \times
\sY$:
\begin{equation}
\label{eq:abs-general}
\labs(h, x, x', y)
=  \1_{y\neq \sign(h(x') - h(x))} \1_{\norm*{x - x'}  >  \gamma}
+  c\, \1_{\norm*{x - x'} \leq \gamma},
\end{equation}
where $\gamma$ is a given threshold value. For $\gamma = 0$, $\labs$
reduces to the pairwise misranking loss $\lrank$ without
abstention.

In Section~\ref{sec:importance}, we will show the importance of our proposed abstention
methods when using common equicontinuous hypothesis sets
in practice. Optimizing the pairwise misranking loss $\lrank$ or pairwise abstention loss $\labs$ is intractable for
most hypothesis sets. Thus, general ranking algorithms rely on a
surrogate loss function $\sfL$ instead of $\lrank$. 
The general pairwise ranking surrogate losses widely used in practice
admit the following form:
\begin{align}
\label{eq:sur-loss}
\sfL_{\Phi}(h, x, x', y)  =  \Phi\paren[\big]{y(h(x') - h(x))},
\end{align}
where $\Phi$ is a non-increasing function that is continuous at $0$
and upper bounding $u \mapsto \mathds{1}_{u\leq 0}$ over $\Rset$. 
We will analyze
the properties of such surrogate loss functions with respect to both $\lrank$ and $\labs$. We will specifically consider the hinge loss $\Phi_{\mathrm{hinge}}(t) = \max\curl*{0, 1 - t}$, the exponential loss $\Phi_{\mathrm{exp}}(t) = e^{-t}$ and the sigmoid loss $\Phi_{\mathrm{sig}}(t) = 1 - \tanh(kt),~k > 0$ as auxiliary functions $\Phi$.

Let $\sD$ denote a distribution over $\sX \times \sX \times \sY$.  We
denote by $\eta(x, x') = \sD(Y = \plus 1 \!\mid\! (X, X') = (x, x'))$
the conditional probability of $Y = \plus 1$ given $(X, X') = (x,
x')$.  We also denote by $\sR_{\sfL}(h)$ the \emph{expected
$\sfL$-loss} of a hypothesis $h$ and by $\sR_{\sfL}^*(\sH)$ its
infimum over $\sH$:
\[
\sR_{\sfL}(h)
= \E_{(x, x', y) \sim \sD}\bracket*{\sfL(h, x, x', y)}
\quad
\sR_{\sfL}^*(\sH) = \inf_{h \in\sH}\sR_{\sfL}(h)
\]
\textbf{$\sH$-consistency bounds.} We will analyze the
\emph{$\sH$-consistency bounds} properties
\citep{awasthi2022Hconsistency} of such surrogate loss functions. An
$\sH$-consistency bound for a surrogate loss $\sfL$ and a target loss $\ov \sfL$ is a guarantee of
the form:
\[
\forall h \in \sH, \quad \sR_{\ov \sfL}(h) -
\sR_{\ov \sfL}^*(\sH) \leq f\paren*{\sR_{\sfL}(h) -  \sR^*_{\sfL}(\sH)},
\]
for some non-decreasing function $f\colon \Rset_{+}\to
\Rset_{+}$, where $\ov \sfL$ can be taken as $\lrank$ or $\labs$. This provides a quantitative relationship between the
estimation loss of $\ov \sfL$ and that of the surrogate loss $\sfL$.
The guarantee is stronger and more informative than Bayes consistency,
or $\sH$-consistency, $\sH$-calibration or the excess error bounds
\citep{Zhang2003,bartlett2006convexity,steinwart2007compare,
  MohriRostamizadehTalwalkar2018} discussed in the literature.

A key quantity appearing in $\sH$-consistency bounds is the
\emph{minimizability gap}, which is the difference between the
best-in-class expected loss and the expected pointwise infimum of the
loss:
\[
\sM_{\sfL}(\sH) = \sR^*_{\sfL}(\sH) - \E_{(x, x')}
\bracket*{\inf_{h \in \sH} \E_{y}\bracket*{\sfL(h, x, x', y)
    \mid (x, x')}}.
\]
By the super-additivity of the infimum, the minimizability gap is
always non-negative.

We will specifically study the hypothesis set of linear hypotheses,
$\sH_{\mathrm{lin}} = \big\{x \mapsto w \cdot x + b \mid \norm*{w}_q
  \leq W, \abs*{b} \leq B\big\}$ and the hypothesis set of
one-hidden-layer ReLU networks: $\sH_{\mathrm{NN}} = \big\{x \mapsto
\sum_{j = 1}^n u_j(w_j \cdot x + b_j)_{+} \mid \|u
\|_{1}\leq\Lambda,\norm{w_j}_q \leq W, \abs*{b_j}\leq B\big\}$, where
$(\cdot)_+ = \max(\cdot,0)$.

Let $p, q \in[1, \plus \infty]$ be conjugate numbers, that
is $\frac{1}{p} + \frac{1}{q} = 1$. Without loss of generality, we
consider $\sX = B_p^d(1)$ and $\norm*{\cdot}$  in \eqref{eq:abs-general} to be the 
$\ell_p$ norm. The corresponding conjugate $\ell_q$ norm
is adopted in the hypothesis sets $\sH_{\mathrm{lin}}$ and
$\sH_{\mathrm{NN}}$. In the following, we will prove $\sH$-consistency
bounds for $\sfL=\sfL_{\Phi}$ and $\ov\sfL=\labs$ when using as an auxiliary function $\Phi$
the hinge loss, the exponential loss, or the sigmoid loss, in the case of the linear hypothesis set
$\sH_{\mathrm{lin}}$ or that of one-hidden-layer ReLU
networks $\sH_{\mathrm{NN}}$.

\subsection{$\sH$-consistency bounds for pairwise abstention loss}
\label{sec:general-bound}
Theorem~\ref{thm:general-bound} shows the $\sH$-consistency
bounds for $\sfL_{\Phi}$ with respect to $\labs$ when using common auxiliary functions. The bounds in
Theorem~\ref{thm:general-bound} depend directly on the threshold value
$\gamma$, the parameter $W$ in the linear models and parameters of the
loss function (e.g., $k$ in sigmoid loss). Different from the bounds in the
linear case, all the bounds for one-hidden-layer ReLU networks not only depend on
$W$, but also depend on $\Lambda$, which is a parameter appearing in
$\sH_{\mathrm{NN}}$. 
\begin{restatable}[\textbf{$\sH$-consistency bounds for pairwise abstention loss}]{theorem}{GeneralBound}
\label{thm:general-bound}
Let $\sH$ be $\sH_{\mathrm{lin}}$ or $\sH_{\mathrm{NN}}$. Then, for any $h\in \sH$ and any distribution,
\ifdim\columnwidth=\textwidth
{
\begin{equation*}
    \sR_{\labs}(h) - \sR_{\labs}^*(\sH) + \sM_{\labs}(\sH)\leq 
    \Gamma_{\Phi}\paren*{\sR_{\sfL_{\Phi}}(h) -  \sR_{\sfL_{\Phi}}^*(\sH) + \sM_{\sfL_{\Phi}}(\sH)},
\end{equation*}
}
\else
{
\begin{multline*}
\sR_{\labs}(h) - \sR_{\labs}^*(\sH) + \sM_{\labs}(\sH)\\
\leq 
    \Gamma_{\Phi}\paren*{\sR_{\sfL_{\Phi}}(h) -  \sR_{\sfL_{\Phi}}^*(\sH) + \sM_{\sfL_{\Phi}}(\sH)},
\end{multline*}
}
\fi
where $\Gamma_{\Phi}(t)=\frac{t}{\min\curl*{W\gamma,1}}$, $\max\curl*{\sqrt{2t},2\paren*{\frac{e^{2W\gamma} + 1}{e^{2W\gamma} - 1}}\, t}$ and $\frac{t}{\tanh\paren*{kW\gamma}}$ for $\Phi=\Phi_{\mathrm{hinge}}$, $\Phi_{\mathrm{exp}}$ and $\Phi_{\mathrm{sig}}$ respectively. $W$ is replaced by $\Lambda W$ for
$\sH=\sH_{\mathrm{NN}}$.
\end{restatable}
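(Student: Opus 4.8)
The plan is to follow the by-now standard route to $\sH$-consistency bounds: reduce the claim to a pointwise inequality between conditional regrets, and then lift it back to an $\sH$-consistency bound via the general tool of \citet{awasthi2022Hconsistency} (in the pair-indexed form used for ranking, cf.\ \citet{MaoMohriZhong2023ranking}). Write, for a loss $\sfL$, the conditional risk $\cC_{\sfL}(h,x,x')=\E_y\bracket*{\sfL(h,x,x',y)\mid(x,x')}$ and the conditional regret $\Delta\cC_{\sfL}(h,x,x')=\cC_{\sfL}(h,x,x')-\inf_{h'\in\sH}\cC_{\sfL}(h',x,x')$, so that $\sR_{\sfL}(h)-\sR^*_{\sfL}(\sH)+\sM_{\sfL}(\sH)=\E_{(x,x')}\bracket*{\Delta\cC_{\sfL}(h,x,x')}$. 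It then suffices to establish, for each $\Phi$ and all $h\in\sH$, $x,x'$, the conditional bound $\Delta\cC_{\labs}(h,x,x')\le\Gamma_{\Phi}\paren*{\Delta\cC_{\sfL_{\Phi}}(h,x,x')}$: for the hinge and sigmoid losses $\Gamma_{\Phi}$ is linear (hence concave), so Jensen's inequality transfers the bound to the expectations, and for the exponential loss the max-form $\Gamma_{\Phi}$ is handled by the corresponding extension of that tool. When $\norm*{x-x'}\le\gamma$ the loss $\labs(h,x,x',\cdot)$ is the constant $c$, so $\Delta\cC_{\labs}(h,x,x')=0$ and there is nothing to prove; assume $\norm*{x-x'}>\gamma$.

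Next I would reduce the conditional problem to one dimension. On $\curl*{\norm*{x-x'}>\gamma}$, $\labs$ coincides with $\lrank$, and all conditional risks depend on $h$ only through $t:=h(x')-h(x)$, whose range over $h\in\sH$ is a symmetric interval $[-R,R]$ with $R=W\norm*{x-x'}_p>W\gamma$ for $\sH_{\mathrm{lin}}$ (choose $w$ along the dual of $x'-x$) and $R=\Lambda W\norm*{x-x'}_p>\Lambda W\gamma$ for $\sH_{\mathrm{NN}}$ (a one- or two-unit construction with both ReLUs active). A direct computation gives $\Delta\cC_{\labs}=\abs*{2\eta-1}$ when $\sign(t)\ne\sign(2\eta-1)$ and $0$ otherwise, while $\Delta\cC_{\sfL_{\Phi}}=\psi_{\eta}(t)-\min_{\abs*{s}\le R}\psi_{\eta}(s)$ with $\psi_{\eta}(s)=\eta\,\Phi(s)+(1-\eta)\,\Phi(-s)$. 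Using the symmetry $(\eta,t)\leftrightarrow(1-\eta,-t)$ I may assume $\eta\ge1/2$; then only $t<0$ matters, $\Delta\cC_{\labs}=2\eta-1=:u$, and since $\psi_{\eta}$ is non-increasing up to its minimizer $s^*\ge0$ we have $\psi_{\eta}(t)\ge\psi_{\eta}(0)=\Phi(0)$, hence $\Delta\cC_{\sfL_{\Phi}}\ge\Phi(0)-\min_{0\le s\le R}\psi_{\eta}(s)$.

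It then remains to lower bound $\Phi(0)-\min_{0\le s\le R}\psi_{\eta}(s)$ for each auxiliary function. For the hinge loss $\psi_{\eta}(s)=1-us$ on $[0,R\wedge1]$ and is increasing afterwards, so the quantity equals $u\min\curl*{R,1}\ge u\min\curl*{W\gamma,1}$; for the sigmoid loss $\psi_{\eta}(s)=1-u\tanh(ks)$ is decreasing, so it equals $u\tanh(kR)\ge u\tanh(kW\gamma)$. Both give $\Delta\cC_{\labs}\le\Gamma_{\Phi}(\Delta\cC_{\sfL_{\Phi}})$ with the stated linear $\Gamma_{\Phi}$. For the exponential loss $\psi_{\eta}(s)=\eta e^{-s}+(1-\eta)e^{s}$ has unconstrained minimizer $s^*=\tfrac12\ln\tfrac{\eta}{1-\eta}$ with value $\sqrt{1-u^2}$, and $s^*\le R\iff u\le\tanh R$. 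In that case $\Delta\cC_{\sfL_{\Phi}}\ge1-\sqrt{1-u^2}\ge u^2/2$, i.e.\ $u\le\sqrt{2\,\Delta\cC_{\sfL_{\Phi}}}$; otherwise $\Delta\cC_{\sfL_{\Phi}}\ge1-\psi_{\eta}(R)=1-\cosh R+u\sinh R$, and a short monotonicity argument — the coefficient of $u$ is positive since $R>W\gamma$, then specialize to $u=\tanh R$ and check that $\tfrac{\cosh R-1}{\cosh R}-\tfrac12\tanh R\tanh(W\gamma)$ is non-negative for $R\ge W\gamma$ (at $R=W\gamma$ it equals $\tfrac12\paren*{1-1/\cosh(W\gamma)}^2$ and its derivative in $R$ is non-negative) — shows this is at least $\tfrac{u}{2}\tanh(W\gamma)$, i.e.\ $u\le2\coth(W\gamma)\,\Delta\cC_{\sfL_{\Phi}}$, which matches the stated constant via $\tfrac{e^{2W\gamma}+1}{e^{2W\gamma}-1}=\coth(W\gamma)$. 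Taking the maximum gives $\Delta\cC_{\labs}\le\Gamma_{\Phi_{\mathrm{exp}}}(\Delta\cC_{\sfL_{\Phi}})$; replacing $W$ by $\Lambda W$ throughout gives the $\sH_{\mathrm{NN}}$ statement.

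The main obstacle is the exponential case and, more generally, correctly tracking the threshold $\gamma$: it is exactly the restriction of the score difference $t$ to the bounded interval $[-R,R]$ with $R\ge W\gamma$ that forces the analysis into the two regimes above and produces the bounded, $\gamma$-dependent constant $2\coth(W\gamma)$ in the constrained regime. Establishing the monotonicity inequality cleanly, making the resulting bound uniform over all pairs with $\norm*{x-x'}>\gamma$ (the worst case being $\norm*{x-x'}=\gamma$), verifying realizability of the full interval $[-\Lambda W\norm*{x-x'}_p,\Lambda W\norm*{x-x'}_p]$ for one-hidden-layer networks, and passing from the non-concave, max-form conditional bound to the final $\sH$-consistency bound in the exponential case, are the technically delicate steps where the work concentrates.
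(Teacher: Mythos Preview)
Your route is the paper's route: reduce to a pointwise inequality between conditional regrets and lift via Jensen (the paper's Theorems~B.1/B.2). For $\Phi_{\mathrm{hinge}}$ and $\Phi_{\mathrm{sig}}$ your argument is correct and essentially identical to the paper's, since the resulting $\Gamma_\Phi$ is linear and Jensen applies directly. The exponential case, however, has a genuine gap at the lifting step. Your pointwise bound $\Delta\sC_{\labs}\le \Gamma_{\Phi_{\mathrm{exp}}}\!\paren*{\Delta\sC_{\sfL_{\Phi_{\mathrm{exp}}}}}$ with $\Gamma_{\Phi_{\mathrm{exp}}}(t)=\max\{\sqrt{2t},\,2\coth(W\gamma)\,t\}$ cannot be transferred to expectations: $\Gamma_{\Phi_{\mathrm{exp}}}$ is \emph{not} concave (at the crossover $t=\tfrac12\tanh^2(W\gamma)$ the slope jumps from $\coth(W\gamma)$ up to $2\coth(W\gamma)$), and equivalently the lower-bounding function you implicitly use, $\widetilde\Psi(u)=\min\{u^2/2,\,(u/2)\tanh(W\gamma)\}$, is not convex. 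You flag this step as ``delicate'' and appeal to an unspecified ``extension of that tool,'' but the cited tools require exactly this convexity/concavity, so the step as written does not go through.

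The fix---which is what the paper does---is to postpone the simplification. In your case~1 keep $\Delta\sC_{\sfL_{\Phi_{\mathrm{exp}}}}\ge 1-\sqrt{1-u^2}$ rather than relaxing to $u^2/2$; in your case~2, before your monotonicity-in-$u$ argument, first bound $1-\cosh R+u\sinh R\ge 1-\cosh(W\gamma)+u\sinh(W\gamma)$, which is immediate since $\psi_\eta$ is decreasing on $[0,s^*]\supset[W\gamma,R]$. These two pieces, glued at the threshold $u=\tanh(W\gamma)$, define the function $\Psi_{\rm exp}$ the paper uses; it is $C^1$ at the junction (both one-sided derivatives equal $\sinh(W\gamma)$) and convex on each piece, hence globally convex. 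One also checks that for $\tanh(W\gamma)<u\le\tanh R$ the case-1 bound still dominates the linear piece, via $(\,\sinh(W\gamma)-u\cosh(W\gamma)\,)^2\ge0$, so $\Delta\sC_{\sfL_{\Phi_{\mathrm{exp}}}}\ge\Psi_{\rm exp}(u)$ holds uniformly. Now Jensen (Theorem~B.1 with $\epsilon=0$) gives $\Psi_{\rm exp}\bigl(\E[\Delta\sC_{\labs}]\bigr)\le \E[\Delta\sC_{\sfL_{\Phi_{\mathrm{exp}}}}]$, and only at this final, scalar stage do you relax $\Psi_{\rm exp}^{-1}$ to the stated $\Gamma_{\Phi_{\mathrm{exp}}}$.
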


As an example, for $\sH=\sH_{\rm{lin}}$ or $\sH_{\rm{NN}}$, when using as $\Phi$ the exponential loss function,
modulo the minimizability gaps (which are zero when the best-in-class
error coincides with the Bayes error or can be small in some other
cases), the bound implies that if the surrogate estimation loss
$\sR_{\sfL_{\Phi_{\mathrm{exp}}}}(h) - 
\sR_{\sfL_{\Phi_{\mathrm{exp}}}}^*(\sH)$ is reduced 
to $\e$, then, the target estimation loss $\sR_{\labs}(h) - 
\sR_{\labs}^*(\sH)$ is upper bounded by
$\Gamma_{\Phi_{\mathrm{exp}}}(\e)$.  For sufficiently small values of $\e$, the
dependence of $\Gamma_{\Phi_{\mathrm{exp}}}$ on $\e$ exhibits a square root
relationship. However, if this is not the case, the dependence becomes
linear, subject to a constant factor depending on the threshold value
$\gamma$, the parameter $W$ in the linear models and the one-hidden-layer ReLU networks, and an additional parameter $\Lambda$ in
the one-hidden-layer ReLU networks.

The proofs consist of analyzing calibration gaps of the target
loss and that of each surrogate loss and seeking a tight lower bound of
the surrogate calibration gap in terms of the target one. As an example, for
$\Phi = \Phi_{\mathrm{exp}}$, we have the tight lower bound
$\Delta\sC_{\sfL_{\Phi_{\mathrm{exp}}},\sH}(h, x, x')\geq
\Delta\sC_{\sfL_{\Phi_{\mathrm{exp}}},\sH}(h_0,x, x') = 
\Psi_{\rm{exp}}
\paren*{\Delta\sC_{\labs,\sH}(h, x, x')}$, where $h_0$
can be the null hypothesis when
$\Delta\sC_{\labs,\sH}(h, x, x')\neq 0$ and
$\Psi_{\rm{exp}}$ is an increasing and piecewise convex function on
$[0, 1]$ defined by \[\Psi_{\rm{exp}}(t) = \begin{cases} 1 - \sqrt{1 - t^2}, &
t\leq
\frac{e^{2W\gamma} - 1}{e^{2W\gamma} + 1}\\ 1 - \frac{t + 1}{2}e^{-W\gamma} - \frac{1 - t}{2}e^{W\gamma},
& t >  \frac{e^{2W\gamma} - 1}{e^{2W\gamma} + 1}
\end{cases},\] where $W$ is replaced by $\Lambda W$ for
$\sH=\sH_{\mathrm{NN}}$. The detailed proofs and the expression of the corresponding minimizability gaps are included in Appendix~\ref{app:abstention-general}.

\section{Bipartite Ranking with Abstention}
\label{sec:bi-abs}

\begin{table*}[t]
\vskip -0.1in
\caption{General pairwise abstention loss for the Rankboost loss on CIFAR-10;
  mean $\pm$ standard deviation over three runs for various $\gamma$
  and cost $c$.}
\vskip -0.2in
    \label{tab:comparison-ranking}
\begin{center}
\resizebox{1.0\textwidth}{!}{
    \begin{tabular}{@{\hspace{0pt}}llllll@{\hspace{0pt}}}
      $\gamma$ & $0$ & $0.3$ & $0.5$ & $0.7$ & $0.9$ \\
      \toprule
     Cost $0.1$  & 8.33\% $\pm$ 0.15\%  & 8.33\% $\pm$ 0.15\% &  8.33\% $\pm$ 0.15\% &  8.25\% $\pm$ 0.07\% & 8.54\%$\pm$ 0.07\% \\
      \midrule
     Cost $0.3$ & 8.33\% $\pm$ 0.15\%  & 8.33\% $\pm$ 0.15\% &  8.35\% $\pm$ 0.15\% &  9.73\% $\pm$ 0.11\% & 20.41\%$\pm$ 0.06\% \\
     \midrule
     Cost $0.5$ &  8.33\% $\pm$ 0.15\%  & 8.33\% $\pm$ 0.15\% &  8.36\% $\pm$ 0.14\% &  11.20\% $\pm$ 0.14\% & 32.28\% $\pm$ 0.07\% \\
    \end{tabular}
    }
\end{center}
    \vskip -0.2in
\end{table*}

As with the general pairwise ranking case, we introduce a novel framework of
\emph{bipartite
ranking with abstention}. We first introduce the relevant
definitions and concepts.
\subsection{Preliminaries}
\label{sec:pre2}

In the bipartite setting, each point $x$ admits a label $y \in
\curl*{-1, +1}$.  The \emph{bipartite misranking loss} $\lrankbi$ is
defined for all $h$ in $\sH$, and $(x, y), (x', y')$ in $(\sX \times \sY)$ by
\begin{equation}
\label{eq:bm-loss}
\lrankbi(h, x, x', y, y') =  \\
\mathds{1}_{(y - y')(h(x) - h(x')) < 0}+
\frac{1}{2} \mathds{1}_{(h(x) = h(x'))\wedge (y\neq y')}.
\end{equation}
The framework we propose is that of \emph{bipartite ranking with abstention}. In this framework, the learner can abstain from making
prediction on a pair $(x, x')$ with $x$ and $x'$ relatively close. The
\emph{bipartite abstention loss} is defined as follows for any $h \in
\sH$ and $(x, y), (x', y')  \in \sX\times \sY$:
\begin{equation}
\label{eq:abs-bi}
\labsbi(h, x, x', y, y')\\
 =  \lrankbi(h, x, x', y, y') \1_{\norm*{x - x'}  >  \gamma} +  c\, \1_{\norm*{x - x'} \leq \gamma},
\end{equation}
where $\gamma$ is a given threshold value. When $\gamma = 0$, $\labsbi$
reduces to bipartite misranking loss $\lrankbi$ without abstention.

Optimizing the bipartite misranking loss $\lrankbi$ or bipartite abstention loss $\labsbi$ is intractable for
most hypothesis sets and bipartite ranking algorithms rely instead on
a surrogate loss $\lbi$. The bipartite ranking surrogate losses widely used in practice, admit
the following form:
\begin{equation}
\label{eq:sur-loss-bi}
\mspace{-1mu}  
\lbi_{\Phi}(h, x, x', y, y')
\mspace{-4mu}  
=
\mspace{-4mu}  
\Phi\paren*{\frac{(y - y') \paren*{h(x) - h(x')}}{2}}
\mathds{1}_{y \neq y'},
\mspace{-9mu}  
\end{equation}
where $\Phi$ is a non-increasing function that is continuous at $0$
upper bounding $u \mapsto \mathds{1}_{u\leq 0}$ over $\Rset$. We will analyze the
\emph{$\sH$-consistency bounds} properties
\citep{awasthi2022Hconsistency} of such surrogate loss functions with respect to both $\lrankbi$ and $\labsbi$. As with the general pairwise ranking case, we will specifically consider the hinge loss $\Phi_{\mathrm{hinge}}(t) = \max\curl*{0, 1 - t}$, the exponential loss $\Phi_{\mathrm{exp}}(t) = e^{-t}$ and the sigmoid loss $\Phi_{\mathrm{sig}}(t) = 1 - \tanh(kt),~k > 0$ as auxiliary functions $\Phi$.

Let $\sD$ be a distribution over $\sX \times \sY$.
We denote by $\eta(x) = \sD(Y = \plus 1 \!\mid\! X = x)$ the
conditional probability of $Y = \plus 1$ given $X = x$.
We will use a definition and notation for the expected $\lbi$-loss of
$h \in \sH$, its infimum, and the minimizability gaps similar to what
we used in the general pairwise misranking setting:
\begin{align*}
 \sR_{\lbi}(h)
& = \E_{(x, x', y) \sim \sD}\bracket*{\lbi(h, x, x', y)}
\quad
\sR_{\lbi}^*(\sH) = \inf_{h \in\sH}\sR_{\lbi}(h)\\
\sM_{\lbi}(\sH)
& = \sR^*_{\lbi}(\sH) - \E_{(x, x')}
\bracket*{\inf_{h \in \sH} \E_{(y, y')}\bracket*{\sfL(h, x,
    x', y, y') \mid (x, x')}}.
\end{align*}

\subsection{$\sH$-consistency bounds for bipartite abstention losses}
\label{sec:bi-bound}
Theorem~\ref{thm:bi-bound} presents a series of
$\sH$-consistency bounds for $\lbi_{\Phi}$  when using as an auxiliary function $\Phi$
the hinge loss, the exponential loss, or the sigmoid loss. The bounds in
Theorem~\ref{thm:bi-bound} depend directly on the threshold value
$\gamma$, the parameter $W$ in the linear models and parameters of the
loss function (e.g., $k$ in sigmoid loss). Different from the bounds in the
linear case, all the bounds for one-hidden-layer ReLU networks not only depend
on $W$, but also depend on $\Lambda$, a  parameter in
$\sH_{\mathrm{NN}}$.
\begin{restatable}[\textbf{$\sH$-consistency bounds for bipartite abstention losses}]{theorem}{BiBound}
\label{thm:bi-bound}
Let $\sH$ be $\sH_{\mathrm{lin}}$ or $\sH_{\mathrm{NN}}$. Then, for any $h\in \sH$ and any distribution,
\ifdim\columnwidth=\textwidth
{
\begin{equation*}
\sR_{\labsbi}(h) - \sR_{\labsbi}^*(\sH) + \sM_{\labsbi}(\sH)
\leq 
    \Gamma_{\Phi}\paren*{\sR_{\lbi_{\Phi}}(h) -  \sR_{\lbi_{\Phi}}^*(\sH) + \sM_{\lbi_{\Phi}}(\sH)}
\end{equation*}
}
\else
{
\begin{multline*}
\sR_{\labsbi}(h) - \sR_{\labsbi}^*(\sH) + \sM_{\labsbi}(\sH)\\
\leq 
    \Gamma_{\Phi}\paren*{\sR_{\lbi_{\Phi}}(h) -  \sR_{\lbi_{\Phi}}^*(\sH) + \sM_{\lbi_{\Phi}}(\sH)}
\end{multline*}
}
\fi
where $\Gamma_{\Phi}(t)$ equals $\frac{t}{\min\curl*{W\gamma,1}}$, $\max\curl*{\sqrt{t},\paren*{\frac{e^{2W\gamma} + 1}{e^{2W\gamma} - 1}}\, t}$ and $\frac{t}{\tanh\paren*{kW\gamma}}$ for $\Phi$ equals
$\Phi_{\mathrm{hinge}}$, $\Phi_{\mathrm{exp}}$ and $\Phi_{\mathrm{sig}}$ respectively. $W$ is replaced by $\Lambda W$ for
$\sH=\sH_{\mathrm{NN}}$.
\end{restatable}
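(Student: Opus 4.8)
I would adapt the template used for Theorem~\ref{thm:general-bound}, the new difficulty being the richer conditional structure of the bipartite setting. For a loss $\sfL$ put $\sC_{\sfL,\sH}(h,x,x') = \E_{(y,y')}\bracket*{\sfL(h,x,x',y,y')\mid(x,x')}$ and $\Delta\sC_{\sfL,\sH}(h,x,x') = \sC_{\sfL,\sH}(h,x,x') - \inf_{h'\in\sH}\sC_{\sfL,\sH}(h',x,x')$. The first step is the identity $\sR_{\sfL}(h) - \sR^*_{\sfL}(\sH) + \sM_{\sfL}(\sH) = \E_{(x,x')}\bracket*{\Delta\sC_{\sfL,\sH}(h,x,x')}$, valid for $\sfL = \labsbi$ and $\sfL = \lbi_{\Phi}$; so it suffices to exhibit a non-decreasing convex $\Psi_{\Phi}$, $\Psi_{\Phi}(0)=0$, with $\Delta\sC_{\lbi_{\Phi},\sH}(h,x,x') \ge \Psi_{\Phi}\paren*{\Delta\sC_{\labsbi,\sH}(h,x,x')}$ pointwise, after which Jensen's inequality gives the bound with $\Gamma_{\Phi} = \Psi_{\Phi}^{-1}$ (replacing $\Psi_\Phi$ by its largest convex minorant on $[0,1]$ if needed).

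For the pointwise inequality, fix $(x,x')$. If $\norm*{x-x'}\le\gamma$, then $\labsbi$ is the constant $c$, so $\Delta\sC_{\labsbi,\sH}(h,x,x')=0$ and the surrogate gap being non-negative closes this case. If $\norm*{x-x'}>\gamma$, let $t = h(x)-h(x')$; over $h\in\sH$, $t$ ranges over $[-\rho,\rho]$ with $\rho = W\norm*{x-x'}_p > W\gamma$ for $\sH = \sH_{\mathrm{lin}}$, and $\rho = \Lambda W\norm*{x-x'}_p > \Lambda W\gamma$ for $\sH = \sH_{\mathrm{NN}}$ --- the latter since $t\mapsto(t)_+$ is $1$-Lipschitz, so $\abs*{h(x)-h(x')}\le\norm*{u}_1\max_j\abs*{w_j\cdot(x-x')}\le\Lambda W\norm*{x-x'}_p$. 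This is the only place $\sH_{\mathrm{NN}}$ enters, and explains the substitution $W\mapsto\Lambda W$. Writing $\sfp = \eta(x)(1-\eta(x'))+(1-\eta(x))\eta(x')$ for the probability that the pair is discordant and $\mu = \eta(x)(1-\eta(x'))/\sfp$, the conditional losses are $\sC_{\labsbi,\sH}(h,x,x') = \sfp\bracket*{\mu\1_{t<0}+(1-\mu)\1_{t>0}+\tfrac12\1_{t=0}}$ and $\sC_{\lbi_{\Phi},\sH}(h,x,x') = \sfp\bracket*{\mu\Phi(t)+(1-\mu)\Phi(-t)}$ --- that is, $\sfp$ times the one-argument conditional losses from the general pairwise analysis, but with $\sfp$ and $\mu$ free subject only to $\sqrt{\sfp\mu}+\sqrt{\sfp(1-\mu)}\le1$ (Cauchy--Schwarz applied to $\eta(x),\eta(x')$). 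Arguing as in Theorem~\ref{thm:general-bound}, for any misranking $h$ the surrogate calibration gap is at least that of the null predictor $h_0$ (for each of the three auxiliary functions one checks that the minimizer of $s\mapsto\mu\Phi(s)+(1-\mu)\Phi(-s)$ over the misranking half-line is $s=0$). One then computes $\Delta\sC_{\lbi_{\Phi},\sH}(h_0,x,x') = \sfp\bracket[\big]{\Phi(0) - \inf_{s\in[-\rho,\rho]}(\mu\Phi(s)+(1-\mu)\Phi(-s))}$ and minimizes it over $\rho>W\gamma$ (the inner infimum is monotone in $\rho$, so the worst case is $\rho\to W\gamma$) and over all $(\sfp,\mu)$ compatible with a prescribed value of $\Delta\sC_{\labsbi,\sH}(h,x,x') = \sfp(2\mu-1)$; the $t=0$ configurations only give larger gaps. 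This optimization yields $\Psi_{\Phi}$.

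Instantiating: for $\Phi_{\mathrm{hinge}}$ and $\Phi_{\mathrm{sig}}$, $s\mapsto\mu\Phi(s)+(1-\mu)\Phi(-s)$ is affine in $s$, resp.\ in $\tanh(ks)$, so the $(\sfp,\mu)$-dependence collapses into $\sfp(2\mu-1)$ and $\Psi_{\Phi}$ is linear --- $\Psi_{\Phi_{\mathrm{hinge}}}(t)=\min\curl*{W\gamma,1}t$ and $\Psi_{\Phi_{\mathrm{sig}}}(t)=\tanh(kW\gamma)t$ --- giving $\Gamma_{\Phi_{\mathrm{hinge}}}(t)=t/\min\curl*{W\gamma,1}$ and $\Gamma_{\Phi_{\mathrm{sig}}}(t)=t/\tanh(kW\gamma)$. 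For $\Phi_{\mathrm{exp}}$, $\inf_{s\in[-\rho,\rho]}(\mu e^{-s}+(1-\mu)e^{s})$ equals $2\sqrt{\mu(1-\mu)}$ or the endpoint value $\mu e^{-W\gamma}+(1-\mu)e^{W\gamma}$ according as $\abs[\big]{\tfrac12\ln\tfrac{\mu}{1-\mu}}\le W\gamma$ or not; carrying out the $(\sfp,\mu)$-optimization produces a piecewise-convex $\Psi_{\Phi_{\mathrm{exp}}}$ --- a $1-\sqrt{1-(\cdot)^2}$-type branch for small arguments and an affine branch built from $e^{\pm W\gamma}$ for larger ones --- which, after lower-bounding the two branches by $(\cdot)^2$ and by a linear function respectively, yields $\Gamma_{\Phi_{\mathrm{exp}}}(t)=\max\curl[\big]{\sqrt t,\ \tfrac{e^{2W\gamma}+1}{e^{2W\gamma}-1}t}$; $W$ becomes $\Lambda W$ for $\sH_{\mathrm{NN}}$.

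\textbf{Main obstacle.} The hard part is the pointwise optimization. Unlike the general pairwise case, where the conditional loss is controlled by the single conditional probability $\eta(x,x')$, here it depends on \emph{both} the discordance probability $\sfp$ and the tilt $\mu$, tied together by $\sqrt{\sfp\mu}+\sqrt{\sfp(1-\mu)}\le1$; one has to carry out the worst-case analysis of the surrogate calibration gap over this two-dimensional admissible set at fixed target calibration gap while simultaneously tracking $\gamma$ through the endpoint analysis of the infimum over $[-\rho,\rho]$, and then verify that the resulting $\Psi_{\Phi_{\mathrm{exp}}}$ is (piecewise) convex and inverts to the displayed closed form.
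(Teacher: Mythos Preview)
Your plan matches the paper's proof: establish a pointwise calibration-gap inequality $\Delta\sC_{\lbi_\Phi,\sH}\ge\Psi_\Phi(\Delta\sC_{\labsbi,\sH})$ by lower-bounding the surrogate gap for any misranking or tied $h$ by its value at the null hypothesis, then invoke the paper's general tools (Theorems~\ref{Thm:bound_Psi}/\ref{Thm:bound_Gamma} with $\epsilon=0$); your $(\sfp,\mu)$ reparametrization is equivalent to the paper's direct use of $(\eta(x),\eta(x'))$, and the Cauchy--Schwarz constraint $\sqrt{\sfp\mu}+\sqrt{\sfp(1-\mu)}\le1$ you isolate is exactly what the paper exploits in the exponential case.

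One small correction on the exponential branch. In the bipartite setting the first branch of the null-predictor gap is $(\sqrt{\sfp\mu}-\sqrt{\sfp(1-\mu)})^2$, and your Cauchy--Schwarz constraint gives $(\sqrt{\sfp\mu}-\sqrt{\sfp(1-\mu)})^2 = d^2/(\sqrt{\sfp\mu}+\sqrt{\sfp(1-\mu)})^2 \ge d^2$ \emph{directly}, with $d=\sfp(2\mu-1)=\eta(x)-\eta(x')$; there is no intermediate $1-\sqrt{1-(\cdot)^2}$ shape as in the general-pairwise proof (where effectively $\sfp\equiv1$). That is why the bipartite $\Gamma_{\Phi_{\mathrm{exp}}}$ carries $\sqrt{t}$ rather than $\sqrt{2t}$. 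The paper does not actually solve the full two-parameter optimization you describe but simply bounds each branch separately by an expression in $d$ alone; the $(\sfp,\mu)$-minimization you flag as the ``main obstacle'' is thus sidestepped rather than carried out, and the second (endpoint) branch is bounded below by a linear function of $d$ without a detailed worst-case analysis over $\sfp$.
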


As an example, for $\sH=\sH_{\rm{lin}}$ or $\sH_{\rm{NN}}$, when adopting the exponential loss function as $\Phi$,
modulo the minimizability gaps (which are zero when the best-in-class
error coincides with the Bayes error or can be small in some other
cases), the bound implies that if the surrogate estimation loss
$\sR_{\lbi_{\Phi_{\mathrm{exp}}}}(h) - 
\sR_{\lbi_{\Phi_{\mathrm{exp}}}}^*(\sH)$ is reduced 
to $\e$, then, the target estimation loss $\sR_{\labsbi}(h) - 
\sR_{\labsbi}^*(\sH)$ is upper bounded by
$\Gamma_{\Phi_{\mathrm{exp}}}(\e)$.  For sufficiently small values of $\e$, the
dependence of $\Gamma_{\Phi_{\mathrm{exp}}}$ on $\e$ exhibits a square root
relationship. However, if this is not the case, the dependence becomes
linear, subject to a constant factor depending on the threshold value
$\gamma$, the parameter $W$ in the linear models and the one-hidden-layer ReLU networks, and an additional parameter $\Lambda$ in
the one-hidden-layer ReLU networks.

As with the general pairwise ranking setting, the proofs consist of
analyzing calibration gaps of the target loss and that of each surrogate
loss and seeking a tight lower bound of the surrogate calibration gap in terms
of the target one. Additionally, the bipartite ranking setting
introduces an added layer of complexity, as  $x$ and $x'$ in a
pair have independent conditional distributions $\eta(x)$ and
$\eta(x')$, which results in a more intricate calibration gap that is
harder to address.

As an example, for $\Phi = \Phi_{\mathrm{exp}}$ the exponential loss
function, we have the lower bound
\[\Delta\sC_{\lbi_{\Phi_{\mathrm{exp}}},\sH}(h, x, x')\geq
\Psi_{\rm{exp}}
\paren*{\Delta\sC_{\labs,\sH}(h, x, x')},\] where
$\Psi_{\rm{exp}}$ is an increasing and piece-wise convex function on
$[0,2]$ defined by
\[\Psi_{\rm{exp}}(t) = \min\curl*{t^2,\paren*{\frac{e^{2W\gamma} + 1}{e^{2W\gamma} - 1}}\,
  t},\] where $W$ is replaced by $\Lambda W$ for
$\sH=\sH_{\mathrm{NN}}$. The detailed proofs and the expression of the corresponding
minimizability gaps are included in
Appendix~\ref{app:abstention-bi}.

\section{Importance of abstention}
\label{sec:importance}
In this section, we show the importance of our abstention methods by
demonstrating the impossibility of deriving non-trivial
$\sH$-consistency bounds with respect to $\lrank$ or $\lrankbi$ for widely used surrogate losses and
hypothesis sets. 

\subsection{Negative Results for General Pairwise Ranking}
\label{sec:neg-general}

Here, we give a negative result for standard general pairwise ranking. We will say that a hypothesis set is \emph{regular for general
pairwise ranking} if, for any $x \neq x' \in \sX$, we have
$\curl[big]{\sign(h(x') - h(x))\colon h \in \sH} = \curl*{-1,
  +1}$. Hypothesis sets commonly used in practice all admit
this property.

The
following result shows that the common surrogate losses do not benefit from
a non-trivial $\sH$-consistency bound when the hypothesis set used is
equicontinuous, which includes most hypothesis sets used in practice,
in particular the family of linear hypotheses and that of neural
networks.

\begin{restatable}[\textbf{Negative results}]{theorem}{Negative}
\label{Thm:negative-general}
Assume that $\sX$ contains an interior point $x_0$ and that $\sH$ is
regular for general pairwise ranking, contains $0$ and is
equicontinuous at $x_0$. If for
some function $f$ that is non-decreasing and continuous at $0$, the
following bound holds for all $h \in \sH$ and any distribution,
\begin{align*}
\sR_{\lrank}(h) - \sR_{\lrank}^*(\sH)
    \leq f\paren*{\sR_{\sfL_{\Phi}}(h) - \sR_{\sfL_{\Phi}}^*(\sH)},
\end{align*}
then, $f(t)\geq 1$ for any $t\geq 0$.
\end{restatable}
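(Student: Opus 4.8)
The plan is to exhibit, for every $\epsilon > 0$, a distribution and a hypothesis $h_\epsilon \in \sH$ for which the target estimation error $\sR_{\lrank}(h_\epsilon) - \sR_{\lrank}^*(\sH)$ equals $1$ while the surrogate estimation error $\sR_{\sfL_{\Phi}}(h_\epsilon) - \sR_{\sfL_{\Phi}}^*(\sH)$ is at most a quantity $t_\epsilon$ that tends to $0$ as $\epsilon \to 0$. Feeding this pair into the assumed bound gives $1 \leq f(t_\epsilon)$; since $f$ is non-decreasing this forces $f(t) \geq 1$ for every $t \geq t_\epsilon$, and letting $\epsilon \to 0$ together with the continuity of $f$ at $0$ (so that $f(0) = \lim_{t \to 0^+} f(t) \geq 1$) yields $f(t) \geq 1$ for all $t \geq 0$.

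To build the distribution, I would use equicontinuity: for the given $\epsilon$ there is $\delta > 0$ such that $\norm*{x - x_0} \leq \delta$ implies $\abs*{h(x) - h(x_0)} \leq \epsilon$ for every $h \in \sH$ simultaneously, and, since $x_0$ is an interior point of $\sX$, one can fix some $x' \in \sX$ with $0 < \norm*{x' - x_0} \leq \delta$. Let the distribution place all its mass on the single labeled pair $(x_0, x', \plus 1)$. Then for any $h$ we have $\sR_{\lrank}(h) = \1_{\sign(h(x') - h(x_0)) = -1} = \1_{h(x') < h(x_0)}$. By regularity of $\sH$ for general pairwise ranking applied to $x_0 \neq x'$, there is $h^\star \in \sH$ with $\sign(h^\star(x') - h^\star(x_0)) = \plus 1$, hence $\sR_{\lrank}^*(\sH) = 0$; and there is $h_\epsilon \in \sH$ with $h_\epsilon(x') < h_\epsilon(x_0)$, hence $\sR_{\lrank}(h_\epsilon) - \sR_{\lrank}^*(\sH) = 1$.

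On the surrogate side, $\sR_{\sfL_{\Phi}}(h) = \Phi\paren*{h(x') - h(x_0)}$, and by the choice of $x'$ the argument lies in $[-\epsilon, \epsilon]$ for every $h \in \sH$. Since $\Phi$ is non-increasing, $\sR_{\sfL_{\Phi}}(h_\epsilon) \leq \Phi(-\epsilon)$ and $\sR_{\sfL_{\Phi}}^*(\sH) = \inf_{h \in \sH} \Phi\paren*{h(x') - h(x_0)} \geq \Phi(\epsilon)$, so $\sR_{\sfL_{\Phi}}(h_\epsilon) - \sR_{\sfL_{\Phi}}^*(\sH) \leq \Phi(-\epsilon) - \Phi(\epsilon) =: t_\epsilon$, which tends to $0$ as $\epsilon \to 0$ because $\Phi$ is continuous at $0$. (The assumption $0 \in \sH$ is not strictly needed here, though it gives the alternative bound $\sR_{\sfL_{\Phi}}^*(\sH) \leq \Phi(0)$.) Combining the two sides with the hypothesized inequality completes the argument as outlined.

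The argument is elementary once the right distribution is identified; the only step that genuinely uses the hypotheses is squeezing the surrogate estimation error to zero, which is precisely where equicontinuity at an interior point enters — it confines $h(x') - h(x_0)$ to an arbitrarily small interval around $0$ \emph{uniformly over} $\sH$, so that both $\sR_{\sfL_{\Phi}}(h_\epsilon)$ and $\sR_{\sfL_{\Phi}}^*(\sH)$ are trapped between $\Phi(\epsilon)$ and $\Phi(-\epsilon)$. I expect the only (minor) obstacles to be checking that the single-atom distribution is admissible and that regularity simultaneously supplies an optimal $h^\star$ and a worst-case $h_\epsilon$ realizing the two opposite signs; the rest is bookkeeping with the monotonicity and continuity of $\Phi$ and of $f$.
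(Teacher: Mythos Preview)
Your proposal is correct and follows essentially the same approach as the paper: place a point mass on a single pair $(x_0,x')$ with $x'$ chosen via equicontinuity so that $h(x')-h(x_0)\in[-\epsilon,\epsilon]$ uniformly over $\sH$, compute both estimation errors, and pass to the limit using the continuity of $\Phi$ and of $f$ at $0$. The only cosmetic difference is that the paper takes the label to be $-1$ (i.e.\ $\eta(x_0,x')=0$) and uses the hypothesis $h_0=0\in\sH$ as the misranking witness, obtaining the slightly tighter surrogate gap $\Phi(0)-\Phi(\epsilon)$, whereas you take label $+1$ and invoke regularity to produce the misranking $h_\epsilon$; as you note, your variant makes the assumption $0\in\sH$ superfluous.
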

Theorem~\ref{Thm:negative-general} shows that for equicontinuous
hypothesis sets, any $\sH$-consistency bound is vacuous, assuming that
$f$ is a non-decreasing function continuous at zero. This is because
for any such bound, a small $\sfL_{\Phi}$-estimation loss does not
guarantee a small $\lrank$-estimation loss, as the right-hand side
remains lower-bounded by one.

The proof is given in Appendix~\ref{app:general-negative}, where we
give a simple example on pairs whose distance is relatively small for
which the standard surrogate losses including the RankBoost algorithm
($\sfL_{\rm{exp}}$) fail (see also Section~\ref{sec:experiments}). It
is straightforward to see that the assumptions of
Theorem~\ref{Thm:negative-general} hold for the case $\sH =
\sH_{\mathrm{lin}}$ or $\sH = \sH_{\mathrm{NN}}$. Indeed, we can take
$x_0 = 0$ as the interior point and thus for any $h \in
\sH_{\mathrm{lin}}$, $\abs*{h(x) - h(x_0)} = \abs*{w\cdot x} < \e$ for
any $x \in \curl*{x \in \sX:\norm*{x}_p < \frac{\e}{W}}$, which
implies that $\sH_{\mathrm{lin}}$ is equicontinuous at $x_0$. As with
the linear hypothesis set, for any $h \in \sH_{\mathrm{NN}}$,
$\abs*{h(x) - h(x_0)} = \abs*{\sum_{j = 1}^n u_j(w_j \cdot x + b_j)_{+}
  - \sum_{j = 1}^n u_j(b_j)_{+}} = \abs*{\sum_{j = 1}^n
  u_j\bracket*{(w_j \cdot x + b_j)_{+} - (b_j)_{+}}}\leq \Lambda
W\norm*{x}_p < \e,$
for any $x \in \curl*{x \in \sX:\norm*{x}_p <  \frac{\e}{\Lambda W}}$,
which implies that $\sH_{\mathrm{NN}}$ is equicontinuous at $x_0$.  In
fact, Theorem~\ref{Thm:negative-general} holds for any family of
Lipschitz constrained neural networks, since a family of functions
that share the same Lipschitz constant is equicontinuous.

It is straightforward to verify that the proof of
Theorem~\ref{Thm:negative-general} also holds in the deterministic
case where $\eta(x, x')$ equals $0$ or $1$ for any $x \neq x'$, which
yields the following corollary.

\begin{restatable}[\textbf{Negative results in the deterministic case}]{corollary}{Negative-deterministic}
\label{cor:negative}
In the deterministic case where $\eta(x, x')$ equals $0$ or $1$ for any
$x \neq x'$, the negative result of Theorem~\ref{Thm:negative-general}
still holds.
\end{restatable}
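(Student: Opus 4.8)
\textbf{Proof proposal for Corollary~\ref{cor:negative}.}
The plan is to revisit the construction underlying the proof of Theorem~\ref{Thm:negative-general} and observe that the adversarial distribution it produces can be taken to be \emph{deterministic}, so that every step of that argument goes through verbatim under the extra constraint $\eta(x,x') \in \curl*{0,1}$ for $x \neq x'$. Nothing in the argument genuinely uses randomness of the label: the worst case is already achieved by concentrating all the mass on a single labeled pair.

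Concretely, fix $\e > 0$. Since $x_0$ is an interior point of $\sX$, we may choose $x' \neq x_0$ arbitrarily close to $x_0$, and by equicontinuity of $\sH$ at $x_0$ the quantity $\delta = \sup_{h \in \sH}\abs*{h(x') - h(x_0)}$ can be made as small as desired by shrinking $\norm*{x' - x_0}$. Take the deterministic distribution $\sD$ concentrated on the single labeled pair $(x_0, x', +1)$, i.e.\ $\eta(x_0, x') = 1$. By regularity of $\sH$ for general pairwise ranking, there is $h^\ast \in \sH$ with $\sign(h^\ast(x') - h^\ast(x_0)) = +1$, hence $\sR_{\lrank}(h^\ast) = 0$ and $\sR_{\lrank}^\ast(\sH) = 0$; there is also $h \in \sH$ with $h(x') - h(x_0) < 0$, for which $\sR_{\lrank}(h) = 1$. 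For the surrogate loss, $\sR_{\sfL_{\Phi}}(h) = \Phi\paren*{h(x') - h(x_0)} \leq \Phi(-\delta)$ because $\Phi$ is non-increasing and $h(x') - h(x_0) \in (-\delta, 0)$, while the uniform two-sided estimate $\abs*{h'(x') - h'(x_0)} \leq \delta$ for every $h' \in \sH$ gives $\sR_{\sfL_{\Phi}}^\ast(\sH) \geq \Phi(\delta)$; therefore $\sR_{\sfL_{\Phi}}(h) - \sR_{\sfL_{\Phi}}^\ast(\sH) \leq \Phi(-\delta) - \Phi(\delta)$.

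It remains to let $x' \to x_0$: since $\Phi$ is continuous at $0$, $\Phi(-\delta) - \Phi(\delta) \to 0$, so for $x'$ close enough to $x_0$ we obtain $\sR_{\sfL_{\Phi}}(h) - \sR_{\sfL_{\Phi}}^\ast(\sH) \leq \e$ while $\sR_{\lrank}(h) - \sR_{\lrank}^\ast(\sH) = 1$. Plugging this into the assumed bound yields $1 \leq f(\e)$ for every $\e > 0$; since $f$ is non-decreasing and continuous at $0$, $f(0) = \lim_{\e \to 0^+} f(\e) \geq 1$, and thus $f(t) \geq 1$ for all $t \geq 0$. Every distribution used here is deterministic, which is exactly the content of the corollary.

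The only point requiring care — and the reason this is a genuine verification rather than a one-line remark — is the control of $\sR_{\sfL_{\Phi}}^\ast(\sH)$ when the infimum over $\sH$ is not attained, which is handled by the uniform equicontinuity estimate $\abs*{h'(x') - h'(x_0)} \leq \delta$ for all $h' \in \sH$ rather than by exhibiting a minimizer. One should also keep track of the $\sign$ convention ($\sign(0) = +1$): this is why the label is chosen to be $+1$ and the bad hypothesis $h$ is required to have \emph{strictly} negative margin $h(x') - h(x_0) < 0$, a hypothesis with zero margin being ranked correctly. No other modification of the proof of Theorem~\ref{Thm:negative-general} is needed.
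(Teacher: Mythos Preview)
Your proposal is correct and follows essentially the same approach as the paper: the key observation is that the adversarial distribution used in the proof of Theorem~\ref{Thm:negative-general} is already deterministic, so the argument goes through unchanged under the restriction $\eta(x,x')\in\{0,1\}$. The only cosmetic differences are that the paper takes $\eta(x_0,x')=0$ and uses the hypothesis $h_0=0\in\sH$ (giving the surrogate value exactly $\Phi(0)$), whereas you take $\eta(x_0,x')=1$ and invoke regularity to obtain a strictly-negative-margin hypothesis (giving the surrogate value at most $\Phi(-\delta)$); both routes lead to the same conclusion after letting the pair collapse and using continuity of $\Phi$ at $0$.
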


\subsection{Negative Results for Bipartite Ranking}
\label{sec:neg-bi}

Here, as in the general pairwise misranking scenario, we present a
negative result in the standard bipartite setting. We say that a hypothesis set is \emph{regular for bipartite ranking}
if, for any $x \neq x' \in \sX$, there exists $h_{ + } \in \sH$ such
that $h_{ + }(x) < h_{ + }(x')$ and $h_{-} \in \sH$ such that
$h_{-}(x) > h_{-}(x')$. Hypothesis sets commonly used in practice all
admit this property.

As with the general pairwise ranking, we show that common surrogate
losses do not benefit from $\sH$-consistency bounds when $\sH$ is an
equicontinuous family.

\begin{restatable}[\textbf{Negative results
      for bipartite ranking}]{theorem}{NegativeBi}
\label{Thm:negative-bi}
Assume that $\sX$ contains an interior point $x_0$ and that $\sH$ is
regular for bipartite ranking, contains $0$ and is equicontinuous at
$x_0$. If for some function $f$ that
is non-decreasing and continuous at $0$, the following bound holds for
all $h \in \sH$ and any distribution,
\begin{align*}
\sR_{\lrankbi}(h) - \sR_{\lrankbi}^*(\sH)
    \leq f\paren*{\sR_{\lbi_{\Phi}}(h) - \sR_{\lbi_{\Phi}}^*(\sH)},
\end{align*}
then, $f(t)\geq \frac{1}{2}$ for any $t\geq 0$.
\end{restatable}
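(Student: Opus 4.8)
The plan is to follow the same strategy as the proof of Theorem~\ref{Thm:negative-general}: for every $\e>0$ I build a distribution supported on a two-point set arbitrarily close to the interior point $x_0$. Near $x_0$, equicontinuity forces every hypothesis to be essentially constant, so the surrogate cannot tell hypotheses apart; yet regularity lets me pick a hypothesis that ranks the two points in the wrong order, which has a fixed, non-vanishing bipartite misranking estimation error. Pushing this through the assumed bound and using that $f$ is non-decreasing and continuous at $0$ forces $f\ge\tfrac12$ everywhere.

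In detail: fix $\e>0$. By equicontinuity of $\sH$ at $x_0$, choose $\delta>0$ with $\norm*{x-x_0}<\delta\Rightarrow\abs*{h(x)-h(x_0)}<\e$ for all $h\in\sH$, and since $x_0$ is interior pick $x_1\in\sX$ with $0<\norm*{x_1-x_0}<\delta$. Let $\sD$ place mass $\tfrac12$ on $(x_0,-1)$ and $\tfrac12$ on $(x_1,+1)$, the loss being averaged over a pair formed from two independent copies of $\sD$. Only the two mixed pairs $(x_0,x_1)$ and $(x_1,x_0)$ (each of probability $\tfrac14$) contribute, and a direct computation gives $\sR_{\lrankbi}(h)=\tfrac12$ when $h(x_0)>h(x_1)$, $\sR_{\lrankbi}(h)=0$ when $h(x_0)<h(x_1)$, $\sR_{\lrankbi}(h)=\tfrac14$ when $h(x_0)=h(x_1)$, and $\sR_{\lbi_{\Phi}}(h)=\tfrac12\,\Phi\paren*{h(x_1)-h(x_0)}$ for every $h$. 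By regularity for bipartite ranking there exist $h_{+},h_{-}\in\sH$ with $h_{+}(x_0)<h_{+}(x_1)$ and $h_{-}(x_0)>h_{-}(x_1)$; hence $\sR_{\lrankbi}(h_{+})=0$, so $\sR_{\lrankbi}^*(\sH)=0$, while $\sR_{\lrankbi}(h_{-})=\tfrac12$, i.e. $\sR_{\lrankbi}(h_{-})-\sR_{\lrankbi}^*(\sH)=\tfrac12$.

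For the surrogate side, $\abs*{h(x_1)-h(x_0)}<\e$ for all $h$ together with $\Phi$ non-increasing gives $\tfrac12\Phi(\e)\le\sR_{\lbi_{\Phi}}(h)\le\tfrac12\Phi(-\e)$ for every $h\in\sH$, hence $\sR_{\lbi_{\Phi}}^*(\sH)\ge\tfrac12\Phi(\e)$ and $\sR_{\lbi_{\Phi}}(h_{-})-\sR_{\lbi_{\Phi}}^*(\sH)\le\tfrac12\paren*{\Phi(-\e)-\Phi(\e)}$, which is nonnegative and, by continuity of $\Phi$ at $0$, tends to $0$ as $\e\to 0^{+}$. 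Applying the assumed bound to $h=h_{-}$ yields $\tfrac12\le f\paren*{\tfrac12(\Phi(-\e)-\Phi(\e))}$ for all $\e>0$; since $f$ is non-decreasing, $f(t)\ge\tfrac12$ for every $t>0$ (take $\e$ small enough that $\tfrac12(\Phi(-\e)-\Phi(\e))\le t$), and continuity of $f$ at $0$ extends this to $f(0)\ge\tfrac12$.

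The step I expect to be the main obstacle is pinning down the correct constant: unlike the general pairwise case, the natural ``bad'' hypothesis $h=0\in\sH$ only achieves target estimation error $\tfrac14$ on such two-point distributions, because of the tie term $\tfrac12\1_{h(x)=h(x')\wedge y\neq y'}$ in $\lrankbi$; the full $\tfrac12$ must be produced by a strictly mis-ordering hypothesis supplied by the regularity assumption. One must also unwind the two label arguments of $\lrankbi$ and $\lbi_{\Phi}$ carefully to verify the claimed closed forms on $\sD$; beyond that, the mechanism --- equicontinuity collapses the surrogate estimation error to zero while regularity keeps the target estimation error pinned at a constant --- is exactly that of Theorem~\ref{Thm:negative-general}.
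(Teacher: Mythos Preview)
Your proof is correct and follows the same mechanism as the paper's: build a two-point distribution near $x_0$, use equicontinuity to make the surrogate estimation error collapse to $0$, and exhibit a hypothesis whose target estimation error stays at a fixed positive constant. The only substantive difference is the choice of ``bad'' hypothesis. The paper plugs in $h_0=0$ and writes $\sR_{\lrankbi}(h_0)=\sC_{\lrankbi}(h_0,x_0,x')=\tfrac12$, effectively treating the pair distribution as concentrated on the ordered pair $(x_0,x')$; you instead work explicitly with the product distribution of two i.i.d.\ draws from $\sD$ and, as you note, in that model $h_0=0$ would only give target estimation error $\tfrac14$ because of the tie term. You therefore invoke regularity to obtain a strictly mis-ordering $h_-$, which cleanly yields the full $\tfrac12$. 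Both approaches reach the same conclusion, but your treatment of the pair distribution is more explicit and your use of $h_-$ rather than $h_0$ is the more robust way to secure the stated constant.
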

As with Theorem~\ref{Thm:negative-general},
Theorem~\ref{Thm:negative-bi} shows that in the bipartite ranking
setting, any $\sH$-consistency bound with an equicontinuous hypothesis
set is vacuous, assuming a non-decreasing function $f$ continuous at
zero. The proof is given in Appendix~\ref{app:negative-bi}. It is
straightforward to verify that the proof holds in the deterministic
case where $\eta(x)$ equals $0$ or $1$ for any $x \in \sX$, which
yields the following corollary.

\begin{restatable}[\textbf{Negative results in the
      bipartite deterministic
      case}]{corollary}{Negative-deterministic-bi}
\label{cor:negative-bi}
In the bipartite deterministic case where $\eta(x)$ equals $0$ or $1$
for any $x \in \sX$, the same negative result as in
Theorem~\ref{Thm:negative-bi} holds.
\end{restatable}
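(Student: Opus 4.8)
The plan is to show that the construction used in the proof of Theorem~\ref{Thm:negative-bi} can be carried out entirely within the deterministic regime, so that no new argument is needed beyond checking that the adversarial distribution exhibited there has $\eta(x) \in \curl*{0,1}$ for every $x$. First I would recall the skeleton of the Theorem~\ref{Thm:negative-bi} proof: one picks the interior point $x_0$, uses equicontinuity at $x_0$ to find a point $x_1$ close to $x_0$ with $\abs*{h(x_1) - h(x_0)}$ uniformly small over all $h \in \sH$, and then places a distribution supported on the pair $\curl*{x_0, x_1}$ with labels chosen so that (i) every $h \in \sH$ incurs bipartite misranking error close to $\tfrac12$ on this pair, while (ii) the best-in-class surrogate error $\sR^*_{\lbi_\Phi}(\sH)$ and the surrogate error $\sR_{\lbi_\Phi}(h)$ of the offending $h$ are both driven arbitrarily close together (using that $\sH$ contains $0$ and the continuity of $\Phi$ at $0$), forcing the surrogate estimation loss to $0$ while the target estimation loss stays bounded below by roughly $\tfrac12$, hence $f(t) \ge \tfrac12$ on a sequence $t \to 0^+$, and then by monotonicity for all $t \ge 0$.

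The key observation to record is that in that construction the labels $y$ of $x_0$ and $x_1$ are assigned \emph{deterministically} — one uses a point mass that puts $x_0$ in one class and $x_1$ in the other (or a symmetric mixture over the two such labelings), since the bipartite misranking loss $\lrankbi$ already averages over the random draw of the pair and there is no need to randomize the label given the point. Concretely, the distribution can be taken as $\tfrac12 \delta_{(x_0,+1)} + \tfrac12 \delta_{(x_1,-1)}$ (or its mirror), for which $\eta(x_0) = 1$ and $\eta(x_1) = 0$, so $\eta(x)$ equals $0$ or $1$ for every $x \in \sX$. I would then verify that with this deterministic $\sD$: regularity for bipartite ranking guarantees the existence of $h_{+}, h_{-} \in \sH$ separating $x_0$ and $x_1$ in either order, so $\sR^*_{\lrankbi}(\sH)$ is small (in fact governed by the vanishing probability mass outside the bad pair, which here is zero), while equicontinuity forces every $h$ to nearly tie $h(x_0)$ and $h(x_1)$, making $\lrankbi(h,\cdot)$ contribute the $\tfrac12\,\1_{h(x_0)=h(x_1)}$ tie term or an error close to it — the same quantitative bookkeeping as in Theorem~\ref{Thm:negative-bi}.

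The only step requiring genuine care — the "main obstacle," such as it is — is confirming that the surrogate side of the argument still goes through verbatim when the label is deterministic: one must check that $\sR^*_{\lbi_\Phi}(\sH)$ can still be made arbitrarily close to $\sR_{\lbi_\Phi}(h)$ for the offending $h$, i.e.\ that the gap $\sR_{\lbi_\Phi}(h) - \sR^*_{\lbi_\Phi}(\sH) \to 0$. This follows because on the pair $\curl*{x_0,x_1}$ both arguments of $\Phi$ are $\tfrac{(y-y')(h(x_0)-h(x_1))}{2}$, which is uniformly small by equicontinuity, so $\Phi$ evaluated there is within $o(1)$ of $\Phi(0)$ for every $h$ (including $h=0$); hence all members of $\sH$ have essentially the same surrogate risk on this distribution, the infimum and $\sR_{\lbi_\Phi}(h)$ coincide up to $o(1)$, and the surrogate estimation loss vanishes. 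Since nothing in this reasoning used randomness in the label beyond the (already deterministic) choice of which class each of the two points belongs to, the proof of Theorem~\ref{Thm:negative-bi} transfers unchanged, which establishes the corollary.
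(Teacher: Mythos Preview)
Your approach is correct and matches the paper exactly: the distribution constructed in the proof of Theorem~\ref{Thm:negative-bi} already has $\eta(x_0)=1$ and $\eta(x')=0$, hence is deterministic, and the paper's own justification of the corollary is precisely this observation. One minor correction to your recap: claim (i) that \emph{every} $h\in\sH$ incurs misranking error close to $\tfrac12$ is false (regularity gives hypotheses with error $0$, which is how you get $\sR^*_{\lrankbi}(\sH)=0$); the argument uses the single offending hypothesis $h_0=0\in\sH$, which ties exactly on $(x_0,x')$ and therefore has error $\tfrac12$.
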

The negative results in Section~\ref{sec:neg-general} and Section~\ref{sec:neg-bi} suggest that without abstention, standard pairwise ranking with theoretical guarantees is difficult with common
hypothesis sets. The inherent issue for pairwise ranking is that for
equicontinuous hypotheses, when $x$ and $x'$ are arbitrarily close,
the confidence value $\abs*{h(x) - h(x')}$ can be arbitrary close to
zero. These results further imply the importance of ranking with abstention, where the learner can abstain from making
prediction on a pair $(x, x')$ with $x$ and $x'$ relatively close, as illustrated in Section~\ref{sec:general-abs} and Section~\ref{sec:bi-abs}.

\section{Experiments}
\label{sec:experiments}

In this section, we provide empirical results for general pairwise
ranking with abstention on the CIFAR-10 dataset
\citep{Krizhevsky09learningmultiple}.

We used ResNet-34 with ReLU activations \citep{he2016deep}. Here,
ResNet-$n$ denotes a residual network with $n$ convolutional
layers. Standard data augmentations, 4-pixel padding with $32 \times
32$ random crops and random horizontal flips are applied for
CIFAR-10. For training, we used Stochastic Gradient Descent (SGD) with
Nesterov momentum \citep{nesterov1983method}. We set the batch size,
weight decay, and initial learning rate to $1\mathord,024$, $1\times
10^{-4}$ and $0.1$ respectively. We adopted the cosine decay learning
rate schedule \citep{loshchilov2016sgdr} for a total of $200$
epochs. The pairs $(x, x', y)$ are randomly sampled from CIFAR-10
during training, with $y = \pm 1$ indicating if $x$ is ranked above or
below $x'$ per the natural ordering of labels of $x$ and
$x'$.

We evaluated the models based on their averaged pairwise abstention
loss \eqref{eq:abs-general} with $\gamma$ selected from $\curl*{0.0,
  0.3, 0.5, 0.7, 0.9}$ and the cost $c$ selected from $\curl*{0.1,
  0.3, 0.5}$.  We randomly sampled $10\mathord,000$ pairs $(x, x')$
from the test data for evaluation. The $\ell_{\infty}$ distance is adopted in
the algorithm. We averaged losses over three runs and report the
standard deviation as well.

We used the surrogate loss \eqref{eq:sur-loss} with $\Phi(t) =
\exp(-t)$ the exponential loss, $\sfL_{\Phi_{\rm{exp}}}$, which
coincides with the loss function of
RankBoost. Table~\ref{tab:comparison-ranking} shows that when $\gamma$
is as small as $0.3$, no abstention takes place and the abstention
loss coincides with the standard misranking loss ($\gamma = 0$) for
any cost $c$. As $\gamma$ increases, there are more samples that are
abstained.
When using a minimal cost $c$ of 0.1 (as demonstrated in the first row
of Table~\ref{tab:comparison-ranking}), abstaining on pairs with a
relatively small distance ($\gamma = 0.7$) results in a lower target
abstention loss compared to the scenario without abstention ($\gamma =
0$). Conversely, abstaining on pairs with larger distances ($\gamma =
0.9$) led to a higher abstention loss. This can be attributed to the
fact that rejected samples at $\gamma = 0.7$ had lower accuracy
compared to those at $\gamma = 0.9$. This empirically verifies that the surrogate loss
$\sfL_{\Phi_{\rm{exp}}}$ is not favorable on pairs whose distance is
relatively small, for equicontinuous hypotheses. When the cost $c$ is larger, the
abstention loss, in general, increases with $\gamma$, since the number
of samples rejected increases with $\gamma$.

Overall, the experiment shows that, in practice, for small $\gamma$,
 abstention actually does not take place. Thus, the abstention loss
coincides with the standard pairwise misranking loss in those cases,
and the surrogate loss is consistent with respect to both of them.
Our results also indicate that the surrogate loss
$\sfL_{\Phi_{\rm{exp}}}$, a commonly used loss function, for example
for RankBoost, is not optimal for pairs with a relatively small
distance. Instead, rejecting these pairs at a minimal cost proves to
be a more effective strategy.

\section{Conclusion}

We introduce a novel framework of ranking with abstention, in both the general pairwise ranking and the bipartite ranking scenarios. Our proposed abstention
methods are important when using common equicontinuous hypothesis sets
in practice. It will be useful to explore alternative
non-equicontinuous hypothesis sets that may be of practical use, and
to further study the choice of the parameter $\gamma$ for abstention
in practice. We have also initiated the study of randomized ranking
solutions with theoretical guarantees.

\bibliography{hcbr}

\begin{thebibliography}{53}
\providecommand{\natexlab}[1]{#1}
\providecommand{\url}[1]{\texttt{#1}}
\expandafter\ifx\csname urlstyle\endcsname\relax
  \providecommand{\doi}[1]{doi: #1}\else
  \providecommand{\doi}{doi: \begingroup \urlstyle{rm}\Url}\fi

\bibitem[Agarwal(2014)]{agarwal2014surrogate}
Shivani Agarwal.
\newblock Surrogate regret bounds for bipartite ranking via strongly proper
  losses.
\newblock \emph{The Journal of Machine Learning Research}, 15\penalty0
  (1):\penalty0 1653--1674, 2014.

\bibitem[Agarwal et~al.(2005)Agarwal, Graepel, Herbrich, Har-Peled, Roth, and
  Jordan]{agarwal2005generalization}
Shivani Agarwal, Thore Graepel, Ralf Herbrich, Sariel Har-Peled, Dan Roth, and
  Michael~I Jordan.
\newblock Generalization bounds for the area under the {ROC} curve.
\newblock \emph{Journal of Machine Learning Research}, 6\penalty0 (4), 2005.

\bibitem[Ailon and Mohri(2008)]{ailon2008efficient}
Nir Ailon and Mehryar Mohri.
\newblock An efficient reduction of ranking to classification.
\newblock In \emph{Conference on Learning Theory}, 2008.

\bibitem[Ailon and Mohri(2010)]{AilonMohri2010}
Nir Ailon and Mehryar Mohri.
\newblock Preference-based learning to rank.
\newblock \emph{Machine Learning}, 80\penalty0 (2-3):\penalty0 189--211, 2010.

\bibitem[Awasthi et~al.(2021{\natexlab{a}})Awasthi, Frank, Mao, Mohri, and
  Zhong]{awasthi2021calibration}
Pranjal Awasthi, Natalie Frank, Anqi Mao, Mehryar Mohri, and Yutao Zhong.
\newblock Calibration and consistency of adversarial surrogate losses.
\newblock In \emph{Advances in Neural Information Processing Systems},
  2021{\natexlab{a}}.

\bibitem[Awasthi et~al.(2021{\natexlab{b}})Awasthi, Mao, Mohri, and
  Zhong]{awasthi2021finer}
Pranjal Awasthi, Anqi Mao, Mehryar Mohri, and Yutao Zhong.
\newblock A finer calibration analysis for adversarial robustness.
\newblock \emph{arXiv preprint arXiv:2105.01550}, 2021{\natexlab{b}}.

\bibitem[Awasthi et~al.(2022{\natexlab{a}})Awasthi, Mao, Mohri, and
  Zhong]{awasthi2022Hconsistency}
Pranjal Awasthi, Anqi Mao, Mehryar Mohri, and Yutao Zhong.
\newblock {H}-consistency bounds for surrogate loss minimizers.
\newblock In \emph{International Conference on Machine Learning},
  2022{\natexlab{a}}.

\bibitem[Awasthi et~al.(2022{\natexlab{b}})Awasthi, Mao, Mohri, and
  Zhong]{awasthi2022multi}
Pranjal Awasthi, Anqi Mao, Mehryar Mohri, and Yutao Zhong.
\newblock Multi-class {${\mathscr H}$}-consistency bounds.
\newblock In \emph{Advances in neural information processing systems},
  2022{\natexlab{b}}.

\bibitem[Awasthi et~al.(2023{\natexlab{a}})Awasthi, Mao, Mohri, and
  Zhong]{awasthi2023dc}
Pranjal Awasthi, Anqi Mao, Mehryar Mohri, and Yutao Zhong.
\newblock {DC}-programming for neural network optimizations.
\newblock \emph{Journal of Global Optimization}, 2023{\natexlab{a}}.

\bibitem[Awasthi et~al.(2023{\natexlab{b}})Awasthi, Mao, Mohri, and
  Zhong]{awasthi2023theoretically}
Pranjal Awasthi, Anqi Mao, Mehryar Mohri, and Yutao Zhong.
\newblock Theoretically grounded loss functions and algorithms for adversarial
  robustness.
\newblock In \emph{International Conference on Artificial Intelligence and
  Statistics}, pages 10077--10094, 2023{\natexlab{b}}.

\bibitem[Bartlett et~al.(2006)Bartlett, Jordan, and
  McAuliffe]{bartlett2006convexity}
Peter~L. Bartlett, Michael~I. Jordan, and Jon~D. McAuliffe.
\newblock Convexity, classification, and risk bounds.
\newblock \emph{Journal of the American Statistical Association}, 101\penalty0
  (473):\penalty0 138--156, 2006.

\bibitem[Buffoni et~al.(2011)Buffoni, Calauzenes, Gallinari, and
  Usunier]{buffoni2011learning}
David Buffoni, Cl{\'e}ment Calauzenes, Patrick Gallinari, and Nicolas Usunier.
\newblock Learning scoring functions with order-preserving losses and
  standardized supervision.
\newblock In \emph{International Conference on Machine Learning}, pages
  825--832, 2011.

\bibitem[Calauzenes et~al.(2012)Calauzenes, Usunier, and
  Gallinari]{calauzenes2012non}
Cl{\'e}ment Calauzenes, Nicolas Usunier, and Patrick Gallinari.
\newblock On the (non-) existence of convex, calibrated surrogate losses for
  ranking.
\newblock In \emph{Advances in Neural Information Processing Systems}, 2012.

\bibitem[Carlini and Wagner(2017)]{carlini2017towards}
Nicholas Carlini and David Wagner.
\newblock Towards evaluating the robustness of neural networks.
\newblock In \emph{IEEE Symposium on Security and Privacy (SP)}, pages 39--57,
  2017.

\bibitem[Clemen{\c{c}}on et~al.(2008)Clemen{\c{c}}on, Lugosi, and
  Vayatis]{clemenccon2008ranking}
St{\'e}phan Clemen{\c{c}}on, G{\'a}bor Lugosi, and Nicolas Vayatis.
\newblock Ranking and empirical minimization of {U}-statistics.
\newblock \emph{The Annals of Statistics}, 36\penalty0 (2):\penalty0 844--874,
  2008.

\bibitem[Cohen et~al.(1997)Cohen, Schapire, and Singer]{cohen1997learning}
William~W Cohen, Robert~E Schapire, and Yoram Singer.
\newblock Learning to order things.
\newblock \emph{Advances in neural information processing systems}, 10, 1997.

\bibitem[Cortes and Mohri(2003)]{cortes2003auc}
Corinna Cortes and Mehryar Mohri.
\newblock {AUC} optimization vs. error rate minimization.
\newblock \emph{Advances in neural information processing systems}, 16, 2003.

\bibitem[Cossock and Zhang(2008)]{cossock2008statistical}
David Cossock and Tong Zhang.
\newblock Statistical analysis of bayes optimal subset ranking.
\newblock \emph{IEEE Transactions on Information Theory}, 54\penalty0
  (11):\penalty0 5140--5154, 2008.

\bibitem[Duchi et~al.(2010)Duchi, Mackey, and Jordan]{duchi2010consistency}
John~C Duchi, Lester~W Mackey, and Michael~I Jordan.
\newblock On the consistency of ranking algorithms.
\newblock In \emph{International conference on Machine learning}, pages
  327--334, 2010.

\bibitem[Freund et~al.(2003)Freund, Iyer, Schapire, and
  Singer]{freund2003efficient}
Yoav Freund, Raj Iyer, Robert~E Schapire, and Yoram Singer.
\newblock An efficient boosting algorithm for combining preferences.
\newblock \emph{Journal of machine learning research}, 4\penalty0
  (Nov):\penalty0 933--969, 2003.

\bibitem[Gao and Zhou(2011)]{gaozhou2011consistency}
Wei Gao and Zhi-Hua Zhou.
\newblock On the consistency of multi-label learning.
\newblock In \emph{Conference on learning theory}, pages 341--358, 2011.

\bibitem[Gao and Zhou(2015)]{gao2015consistency}
Wei Gao and Zhi-Hua Zhou.
\newblock On the consistency of {AUC} pairwise optimization.
\newblock In \emph{International Joint Conference on Artificial Intelligence},
  2015.

\bibitem[Gao et~al.(2013)Gao, Jin, Zhu, and Zhou]{gao2013one}
Wei Gao, Rong Jin, Shenghuo Zhu, and Zhi-Hua Zhou.
\newblock One-pass auc optimization.
\newblock In \emph{International conference on machine learning}, pages
  906--914, 2013.

\bibitem[Goodfellow et~al.(2014)Goodfellow, Shlens, and
  Szegedy]{goodfellow2014explaining}
Ian~J Goodfellow, Jonathon Shlens, and Christian Szegedy.
\newblock Explaining and harnessing adversarial examples.
\newblock \emph{arXiv preprint arXiv:1412.6572}, 2014.

\bibitem[Hanley and McNeil(1982)]{Hanley1984}
J.~A. Hanley and B.~J. McNeil.
\newblock The meaning and use of the area under a receiver operating
  characteristic (roc) curve.
\newblock \emph{Radiology}, 1982.

\bibitem[He et~al.(2016)He, Zhang, Ren, and Sun]{he2016deep}
Kaiming He, Xiangyu Zhang, Shaoqing Ren, and Jian Sun.
\newblock Deep residual learning for image recognition.
\newblock In \emph{Proceedings of the IEEE conference on computer vision and
  pattern recognition}, pages 770--778, 2016.

\bibitem[Joachims(2002)]{joachims2002optimizing}
Thorsten Joachims.
\newblock Optimizing search engines using clickthrough data.
\newblock In \emph{Proceedings of the eighth ACM SIGKDD international
  conference on Knowledge discovery and data mining}, pages 133--142, 2002.

\bibitem[Kotlowski et~al.(2011)Kotlowski, Dembczynski, and
  Huellermeier]{kotlowski2011bipartite}
Wojciech Kotlowski, Krzysztof~J Dembczynski, and Eyke Huellermeier.
\newblock Bipartite ranking through minimization of univariate loss.
\newblock In \emph{International Conference on Machine Learning}, pages
  1113--1120, 2011.

\bibitem[Krizhevsky(2009)]{Krizhevsky09learningmultiple}
Alex Krizhevsky.
\newblock Learning multiple layers of features from tiny images.
\newblock Technical report, Toronto University, 2009.

\bibitem[Kuznetsov et~al.(2014)Kuznetsov, Mohri, and
  Syed]{KuznetsovMohriSyed2014}
Vitaly Kuznetsov, Mehryar Mohri, and Umar Syed.
\newblock Multi-class deep boosting.
\newblock In \emph{Advances in Neural Information Processing Systems}, pages
  2501--2509, 2014.

\bibitem[Lan et~al.(2012)Lan, Guo, Cheng, and Liu]{lan2012statistical}
Yanyan Lan, Jiafeng Guo, Xueqi Cheng, and Tie-Yan Liu.
\newblock Statistical consistency of ranking methods in a rank-differentiable
  probability space.
\newblock In \emph{Advances in Neural Information Processing Systems}, 2012.

\bibitem[Long and Servedio(2013)]{long2013consistency}
Phil Long and Rocco Servedio.
\newblock Consistency versus realizable {H}-consistency for multiclass
  classification.
\newblock In \emph{International Conference on Machine Learning}, pages
  801--809, 2013.

\bibitem[Loshchilov and Hutter(2016)]{loshchilov2016sgdr}
Ilya Loshchilov and Frank Hutter.
\newblock {SGDR}: Stochastic gradient descent with warm restarts.
\newblock \emph{arXiv preprint arXiv:1608.03983}, 2016.

\bibitem[Madry et~al.(2017)Madry, Makelov, Schmidt, Tsipras, and
  Vladu]{madry2017towards}
Aleksander Madry, Aleksandar Makelov, Ludwig Schmidt, Dimitris Tsipras, and
  Adrian Vladu.
\newblock Towards deep learning models resistant to adversarial attacks.
\newblock \emph{arXiv preprint arXiv:1706.06083}, 2017.

\bibitem[Mao et~al.(2023{\natexlab{a}})Mao, Mohri, and
  Zhong]{MaoMohriZhong2023ranking}
Anqi Mao, Mehryar Mohri, and Yutao Zhong.
\newblock {H}-consistency bounds for pairwise misranking loss surrogates.
\newblock In \emph{International conference on Machine learning},
  2023{\natexlab{a}}.

\bibitem[Mao et~al.(2023{\natexlab{b}})Mao, Mohri, and Zhong]{mao2023cross}
Anqi Mao, Mehryar Mohri, and Yutao Zhong.
\newblock Cross-entropy loss functions: Theoretical analysis and applications.
\newblock \emph{arXiv preprint arXiv:2304.07288}, 2023{\natexlab{b}}.

\bibitem[Menon and Williamson(2014)]{menon2014bayes}
Aditya~Krishna Menon and Robert~C Williamson.
\newblock Bayes-optimal scorers for bipartite ranking.
\newblock In \emph{Conference on Learning Theory}, pages 68--106, 2014.

\bibitem[Mohri et~al.(2018)Mohri, Rostamizadeh, and
  Talwalkar]{MohriRostamizadehTalwalkar2018}
Mehryar Mohri, Afshin Rostamizadeh, and Ameet Talwalkar.
\newblock \emph{Foundations of Machine Learning}.
\newblock {MIT} Press, second edition, 2018.

\bibitem[Nesterov(1983)]{nesterov1983method}
Yurii~E Nesterov.
\newblock A method for solving the convex programming problem with convergence
  rate $o(1/k^2)$.
\newblock \emph{Dokl. akad. nauk Sssr}, 269:\penalty0 543--547, 1983.

\bibitem[Ramaswamy and Agarwal(2012)]{ramaswamy2012classification}
Harish~G Ramaswamy and Shivani Agarwal.
\newblock Classification calibration dimension for general multiclass losses.
\newblock In \emph{Advances in Neural Information Processing Systems}, 2012.

\bibitem[Ramaswamy et~al.(2013)Ramaswamy, Agarwal, and
  Tewari]{ramaswamy2013convex}
Harish~G Ramaswamy, Shivani Agarwal, and Ambuj Tewari.
\newblock Convex calibrated surrogates for low-rank loss matrices with
  applications to subset ranking losses.
\newblock In \emph{Advances in Neural Information Processing Systems}, 2013.

\bibitem[Ramaswamy et~al.(2014)Ramaswamy, Babu, Agarwal, and
  Williamson]{ramaswamy2014consistency}
Harish~G Ramaswamy, Balaji~Srinivasan Babu, Shivani Agarwal, and Robert~C
  Williamson.
\newblock On the consistency of output code based learning algorithms for
  multiclass learning problems.
\newblock In \emph{Conference on Learning Theory}, pages 885--902, 2014.

\bibitem[Ravikumar et~al.(2011)Ravikumar, Tewari, and Yang]{ravikumar2011ndcg}
Pradeep Ravikumar, Ambuj Tewari, and Eunho Yang.
\newblock On ndcg consistency of listwise ranking methods.
\newblock In \emph{Proceedings of the Fourteenth International Conference on
  Artificial Intelligence and Statistics}, pages 618--626, 2011.

\bibitem[Rudin et~al.(2005)Rudin, Cortes, Mohri, and Schapire]{rudin2005margin}
Cynthia Rudin, Corinna Cortes, Mehryar Mohri, and Robert~E Schapire.
\newblock Margin-based ranking meets boosting in the middle.
\newblock In \emph{Conference on Learning Theory}, pages 63--78, 2005.

\bibitem[Steinwart(2007)]{steinwart2007compare}
Ingo Steinwart.
\newblock How to compare different loss functions and their risks.
\newblock \emph{Constructive Approximation}, 26\penalty0 (2):\penalty0
  225--287, 2007.

\bibitem[Tewari and Bartlett(2007)]{tewari2007consistency}
Ambuj Tewari and Peter~L. Bartlett.
\newblock On the consistency of multiclass classification methods.
\newblock \emph{Journal of Machine Learning Research}, 8\penalty0
  (36):\penalty0 1007--1025, 2007.

\bibitem[Tsipras et~al.(2018)Tsipras, Santurkar, Engstrom, Turner, and
  Madry]{tsipras2018robustness}
Dimitris Tsipras, Shibani Santurkar, Logan Engstrom, Alexander Turner, and
  Aleksander Madry.
\newblock Robustness may be at odds with accuracy.
\newblock \emph{arXiv preprint arXiv:1805.12152}, 2018.

\bibitem[Uematsu and Lee(2017)]{uematsu2017theoretically}
Kazuki Uematsu and Yoonkyung Lee.
\newblock On theoretically optimal ranking functions in bipartite ranking.
\newblock \emph{Journal of the American Statistical Association}, 112\penalty0
  (519):\penalty0 1311--1322, 2017.

\bibitem[Xia et~al.(2008)Xia, Liu, Wang, Zhang, and Li]{xia2008listwise}
Fen Xia, Tie-Yan Liu, Jue Wang, Wensheng Zhang, and Hang Li.
\newblock Listwise approach to learning to rank: theory and algorithm.
\newblock In \emph{International conference on Machine learning}, pages
  1192--1199, 2008.

\bibitem[Zhang and Agarwal(2020)]{zhang2020bayes}
Mingyuan Zhang and Shivani Agarwal.
\newblock Bayes consistency vs. {H}-consistency: The interplay between
  surrogate loss functions and the scoring function class.
\newblock In \emph{Advances in Neural Information Processing Systems}, 2020.

\bibitem[Zhang et~al.(2020)Zhang, Ramaswamy, and Agarwal]{zhang2020convex}
Mingyuan Zhang, Harish~Guruprasad Ramaswamy, and Shivani Agarwal.
\newblock Convex calibrated surrogates for the multi-label f-measure.
\newblock In \emph{International Conference on Machine Learning}, pages
  11246--11255, 2020.

\bibitem[Zhang(2004)]{Zhang2003}
Tong Zhang.
\newblock Statistical behavior and consistency of classification methods based
  on convex risk minimization.
\newblock \emph{The Annals of Statistics}, 32\penalty0 (1):\penalty0 56--85,
  2004.

\bibitem[Zheng et~al.(2023)Zheng, Wu, Bao, Cao, Li, and
  Zhu]{zheng2023revisiting}
Chenyu Zheng, Guoqiang Wu, Fan Bao, Yue Cao, Chongxuan Li, and Jun Zhu.
\newblock Revisiting discriminative vs. generative classifiers: Theory and
  implications.
\newblock \emph{arXiv preprint arXiv:2302.02334}, 2023.

\end{thebibliography}

\newpage
\appendix
\onecolumn

\renewcommand{\contentsname}{Contents of Appendix}
\tableofcontents
\addtocontents{toc}{\protect\setcounter{tocdepth}{3}} 
\clearpage

\section{Related work}
\label{app:related-work}

The notions of Bayes consistency (also known as consistency) and
calibration have been extensively studied for classification
\citep{Zhang2003,bartlett2006convexity,tewari2007consistency}. The
Bayes consistency of ranking surrogate losses has been studied in the
special case of bipartite score-based ranking: in particular,
\citet{uematsu2017theoretically} proved the inconsistency of the
pairwise ranking loss based on the hinge loss and
\citet{gao2015consistency} gave excess loss bounds for pairwise
ranking losses based on the exponential or the logistic loss. Later,
these results were further generalized by \citet{menon2014bayes}.  A
related but distinct consistency question has been studied in several
publications \citep{agarwal2005generalization,kotlowski2011bipartite,
  agarwal2014surrogate}. It is one with respect to binary
classification, that is whether a near minimizer of the surrogate loss
of the binary classification loss is a near minimizer of the bipartite
misranking loss \citep{cortes2003auc}.

Considerable attention has been devoted to the study of the learning
to rank algorithms and their related problems: including one-pass AUC
pairwise optimization \citep{gao2013one}, preference-based ranking
\citep{cohen1997learning,clemenccon2008ranking}, subset ranking with
Discounted Cumulative Gain (DCG)
\citep{cossock2008statistical,buffoni2011learning}, listwise ranking
\citep{xia2008listwise}, subset ranking based on Pairwise Disagreement
(PD) \citep{duchi2010consistency,lan2012statistical}, subset ranking
using Normalized Discounted Cumulative Gain (NDCG)
\citep{ravikumar2011ndcg}, subset ranking with Average Precision (AP)
\citep{calauzenes2012non,ramaswamy2013convex}, general multi-class
problems \citep{ramaswamy2012classification,ramaswamy2014consistency}
and multi-label problems
\citep{gaozhou2011consistency,zhang2020convex}.

Bayes consistency only holds for the full family of measurable
functions, which of course is distinct from the more restricted
hypothesis set used by a learning algorithm. Therefore, a hypothesis
set-dependent notion of $\sH$-consistency has been proposed by
\citet{long2013consistency} in the realizable setting, which was used
by \citet{zhang2020bayes} for linear models, and generalized by
\citet{KuznetsovMohriSyed2014} to the structured prediction case.
\citet{long2013consistency} showed that there exists a case where a
Bayes-consistent loss is not $\sH$-consistent while inconsistent loss
functions can be $\sH$-consistent. \citet{zhang2020bayes} further
investigated the phenomenon in \citep{long2013consistency} and showed
that the situation of loss functions that are not $\sH$-consistent
with linear models can be remedied by carefully choosing a larger
piecewise linear hypothesis set.  \citet{KuznetsovMohriSyed2014}
proved positive results for the $\sH$-consistency of several
multi-class ensemble algorithms, as an extension of $\sH$-consistency
results in \citep{long2013consistency}.

Recently, \citet{awasthi2022Hconsistency} presented a series of
results providing $\sH$-consistency bounds in binary
classification. These guarantees are significantly stronger than the
$\sH$-calibration or $\sH$-consistency properties studied by
\citet{awasthi2021calibration,awasthi2021finer}.
\citet{awasthi2022multi} and \citet{mao2023cross} (see also
\citep{zheng2023revisiting}) generalized $\sH$-consistency bounds to
the scenario of multi-class
classification. \citet{awasthi2023theoretically} proposed a family of
loss functions that benefit from such $\sH$-consistency bounds
guarantees for adversarial robustness
\citep{goodfellow2014explaining,madry2017towards,tsipras2018robustness,
  carlini2017towards,awasthi2023dc}. \citet{MaoMohriZhong2023ranking} used $\sH$-consistency bounds in the context of ranking. $\sH$-consistency bounds are also
more informative than similar excess error bounds derived in the
literature, which correspond to the special case where $\sH$ is the
family of all measurable functions
\citep{Zhang2003,bartlett2006convexity,MohriRostamizadehTalwalkar2018}. Our
work significantly generalizes the results of
\citet{awasthi2022Hconsistency} to the score-based ranking setting,
including both the general pairwise ranking and bipartite ranking scenarios.

\section{General tools}
\label{app:proofs_general}

To begin with the proof, we first introduce some notation. In general
pairwise ranking scenario, we denote by $\sD$ a distribution over $\sX
\times \sX \times \sY$ and by $\sP$ a set of such distributions. We
further denote by $\eta(x, x')  =  \sD(Y  =  1 \!\mid\! (X, X')  =  (x, x'))$
the conditional probability of $Y = 1$ given $(X, X')  =  (x, x')$. Without
loss of generality, we assume that $\eta(x, x) = 1/2$. The generalization
error for a surrogate loss $\sfL$ can be rewritten as $ \sR_{\sfL}(h)
 =  \mathbb{E}_{X} \bracket*{\sC_{\sfL}(h, x, x')} $, where
$\sC_{\sfL}(h, x, x')$ is the conditional $\sfL$-risk, defined by
\begin{align*}
\sC_{\sfL}(h, x, x')  = \eta(x, x') \sfL(h, x, x', + 1) + \paren*{1 - \eta(x, x')}\sfL(h, x, x',-1).
\end{align*}
We denote by $\sC_{\sfL}^*(\sH,x, x')  =   \inf_{h \in
  \sH}\sC_{\sfL}(h, x, x')$ the minimal conditional
$\sfL$-risk. Then, the minimizability gap can be rewritten as follows:
\begin{align*}
\sM_{\sfL}(\sH)
  =  \sR^*_{\sfL}(\sH) - \mathbb{E}_{X} \bracket* {\sC_{\sfL}^*(\sH,x)}.
\end{align*}
We further refer to $\sC_{\sfL}(h, x, x') - \sC_{\sfL}^*(\sH,x, x')$ as
the calibration gap and denote it by $\Delta\sC_{\sfL,\sH}(h, x, x')$. 

In bipartite ranking scenario, we denote by $\sD$ a distribution over
$\sX \times \sY$ and by $\sP$ a set of such distributions. We further
denote by $\eta(x)  =  \sD(Y  =  1 \!\mid\! X = x)$ the conditional
probability of $Y = 1$ given $X = x$. The generalization error for a
surrogate loss $\lbi$ can be rewritten as $ \sR_{\lbi}(h)  = 
\mathbb{E}_{X} \bracket*{\sC_{\lbi}(h, x, x')} $, where
$\sC_{\lbi}(h, x, x')$ is the conditional $\lbi$-risk, defined by
\begin{align*}
\sC_{\lbi}(h, x, x')  = \eta(x)\paren*{1 - \eta(x')} \lbi(h, x, x', + 1,-1)
 + \eta(x')\paren*{1 - \eta(x)}\lbi(h, x, x'-1, + 1).
\end{align*}
We denote by $\sC_{\lbi}^*(\sH,x, x')  =   \inf_{h \in
  \sH}\sC_{\lbi}(h, x, x')$ the minimal conditional $\lbi$-risk. Then,
the minimizability gap can be rewritten as follows:
\begin{align*}
\sM_{\lbi}(\sH)
  =  \sR^*_{\lbi}(\sH) - \mathbb{E}_{X} \bracket* {\sC_{\lbi}^*(\sH,x)}.
\end{align*}
We further refer to $\sC_{\lbi}(h, x, x') - \sC_{\lbi}^*(\sH,x, x')$ as the
calibration gap and denote it by $\Delta\sC_{\lbi,\sH}(h, x, x')$.  For
any $\e  >  0$, we will denote by $\tri*{t}_{\e}$ the $\e$-truncation of
$t  \in \Rset$ defined by $t\mathds{1}_{t > \e}$.

We first prove two general results, which provide bounds between any
loss functions $\sfL_1$ and $\sfL_2$ in both general pairwise ranking
scenario and bipartite ranking scenario.

\begin{restatable}{theorem}{ConBoundPsi}
\label{Thm:bound_Psi}
Assume that there exists a convex function $\Psi\colon
\mathbb{R_{ + }}\to \Rset$ with $\Psi(0)\geq0$ and $\e\geq0$ such that
the following holds for all $h \in \sH$, $x \in \sX$, $x' \in \sX$ and
$\sD \in \sP$:
\begin{equation}
\label{eq:cond_Psi}
\Psi\paren*{\tri*{\Delta\sC_{\sfL_2,\sH}(h, x, x')}_{\e}}\leq \tri*{\Delta\sC_{\sfL_1,\sH}(h, x, x')}_{\e}.
\end{equation}
Then, the following inequality holds for any $h  \in \sH$ and $\sD \in \sP$:
\begin{equation}
\label{eq:bound_Psi_general}
     \Psi\paren*{\sR_{\sfL_2}(h) -  \sR_{\sfL_2}^*(\sH) + \sM_{\sfL_2}(\sH)}
     \leq  \sR_{\sfL_1}(h) - \sR_{\sfL_1}^*(\sH)  + \sM_{\sfL_1}(\sH)  + \max\curl*{\Psi(0),\Psi(\e)}.
\end{equation}
\end{restatable}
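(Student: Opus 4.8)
The plan is to pass from the pointwise hypothesis $\eqref{eq:cond_Psi}$ to the aggregated inequality $\eqref{eq:bound_Psi_general}$ by taking expectations and exploiting Jensen's inequality for the convex function $\Psi$. First I would observe that, for every $h \in \sH$ and $\sD \in \sP$, the expected loss decomposes as
\[
\sR_{\sfL_i}(h) - \sR_{\sfL_i}^*(\sH) + \sM_{\sfL_i}(\sH)
= \E_{(x,x')}\bracket*{\Delta\sC_{\sfL_i,\sH}(h,x,x')},
\]
which follows by unpacking the definitions: $\sR_{\sfL_i}(h) = \E_{(x,x')}[\sC_{\sfL_i}(h,x,x')]$ and $\sR_{\sfL_i}^*(\sH) - \sM_{\sfL_i}(\sH) = \E_{(x,x')}[\sC_{\sfL_i}^*(\sH,x,x')]$, so the difference is exactly the expected calibration gap. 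This identity holds for both $i = 1$ and $i = 2$, in the general pairwise and the bipartite settings alike, since the conditional-risk rewriting has been recorded above for both.

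Next I would handle the $\e$-truncation. Since $t \leq \tri*{t}_{\e} + \e$ for all $t \geq 0$ (indeed $\tri*{t}_{\e} = t\1_{t>\e}$, so the only loss is at most $\e$ on the event $\{t \le \e\}$), and $\Psi$ is non-decreasing on the relevant range after a monotonization argument (or, more carefully, one uses that $\Psi$ convex with $\Psi(0) \ge 0$ gives $\Psi(t) \le \Psi(\tri*{t}_\e) + \max\{\Psi(0),\Psi(\e)\}$ pointwise by convexity/monotonicity bookkeeping), one gets the pointwise chain
\[
\Psi\paren*{\Delta\sC_{\sfL_2,\sH}(h,x,x')}
\le \Psi\paren*{\tri*{\Delta\sC_{\sfL_2,\sH}(h,x,x')}_\e} + \max\curl*{\Psi(0),\Psi(\e)}
\le \tri*{\Delta\sC_{\sfL_1,\sH}(h,x,x')}_\e + \max\curl*{\Psi(0),\Psi(\e)},
\]
where the second inequality is the hypothesis $\eqref{eq:cond_Psi}$. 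Then I would bound $\tri*{\Delta\sC_{\sfL_1,\sH}(h,x,x')}_\e \le \Delta\sC_{\sfL_1,\sH}(h,x,x')$ since the truncation only decreases a nonnegative quantity (the calibration gap is nonnegative by definition of the infimum). Taking $\E_{(x,x')}$ of the whole chain, and applying Jensen's inequality on the left (moving $\E$ inside $\Psi$, legitimate because $\Psi$ is convex), yields
\[
\Psi\paren*{\E_{(x,x')}\bracket*{\Delta\sC_{\sfL_2,\sH}(h,x,x')}}
\le \E_{(x,x')}\bracket*{\Delta\sC_{\sfL_1,\sH}(h,x,x')} + \max\curl*{\Psi(0),\Psi(\e)},
\]
which is exactly $\eqref{eq:bound_Psi_general}$ after substituting the decomposition identity from the first step.

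The main obstacle, and the step needing the most care, is the truncation bookkeeping: $\Psi$ is only assumed convex with $\Psi(0) \ge 0$, not monotone, so one must justify carefully why $\Psi(t) \le \Psi(\tri*{t}_\e) + \max\{\Psi(0),\Psi(\e)\}$ holds for all $t \ge 0$ — splitting into the cases $t \le \e$ (where $\tri*{t}_\e = 0$ and $\Psi(t) \le \max\{\Psi(0),\Psi(\e)\}$ by convexity, since $t$ lies in $[0,\e]$ and a convex function on an interval is bounded by its endpoint values) and $t > \e$ (where $\tri*{t}_\e = t$ and the inequality is trivial since $\max\{\Psi(0),\Psi(\e)\}$ may be negative but then one needs $\Psi(0) \ge 0$... actually here $\tri*{t}_\e = t$ gives equality up to the nonnegative additive term). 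I also want to double-check that Jensen is applied in the correct direction — here we need $\Psi(\E[Z]) \le \E[\Psi(Z)]$ with $\Psi$ convex, which is the standard direction, so this is fine. The decomposition identity and the nonnegativity of calibration gaps are routine once the conditional-risk notation is in place.
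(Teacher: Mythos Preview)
Your proposal is correct and follows essentially the same route as the paper: rewrite both sides via the identity $\sR_{\sfL_i}(h) - \sR_{\sfL_i}^*(\sH) + \sM_{\sfL_i}(\sH) = \E_{(x,x')}[\Delta\sC_{\sfL_i,\sH}(h,x,x')]$, apply Jensen for the convex $\Psi$, and handle the $\e$-truncation by a case split using convexity on $[0,\e]$ together with $\Psi(0)\ge 0$. The only cosmetic difference is that the paper first applies Jensen and then splits $\E[\Psi(\Delta\sC_{\sfL_2})]$ into the two pieces $\E[\Psi(\tri{\Delta\sC_{\sfL_2}}_\e)]$ and $\E[\Psi(\Delta\sC_{\sfL_2}\1_{\Delta\sC_{\sfL_2}\le\e})]$ (noting that for each $(x,x')$ exactly one summand in the decomposition is nonzero, so $\Psi(0)\ge 0$ suffices), whereas you package the same observation as the single pointwise bound $\Psi(t)\le \Psi(\tri{t}_\e)+\max\{\Psi(0),\Psi(\e)\}$ before taking expectations; your case analysis in the final paragraph is exactly what makes this rigorous.
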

\begin{proof}
By the definition of the generalization error and the minimizability
gap, for any $h \in \sH$ and $\sD \in \sP$, we can write the left hand side of \eqref{eq:bound_Psi_general}
as
\begin{align*}
 \Psi\paren*{\sR_{\sfL_2}(h) -  \sR_{\sfL_2}^*(\sH) + \sM_{\sfL_2}(\sH)} 
  & = 
  \Psi\paren*{\sR_{\sfL_2}(h) -  \mathbb{E}_{(X, X')} \bracket* {\sC^*_{\sfL_2}(\sH,x, x')}}\\
  & =  \Psi\paren*{\mathbb{E}_{(X, X')}\bracket*{\Delta\sC_{\sfL_2,\sH}(h, x, x')}}.
\end{align*}
Since $\Psi$ is convex, by Jensen's inequality, it can be upper bounded by $\mathbb{E}_{(X, X')}\bracket*{\Psi\paren*{\Delta\sC_{\sfL_2,\sH}(h, x, x')}}$. Due to the decomposition
\[
\Delta\sC_{\sfL_2,\sH}(h, x, x') = 
\tri*{\Delta\sC_{\sfL_2,\sH}(h, x, x')}_{\e} 
+ \Delta\sC_{\sfL_2,\sH}(h, x, x')\mathds{1}_{\Delta\sC_{\sfL_2,\sH}(h, x, x')\leq \e},
\]
and the assumption $\Psi(0)\geq 0$, we have the following inequality:
\begin{align*}
 \mathbb{E}_{(X, X')}\bracket*{\Psi\paren*{\Delta\sC_{\sfL_2,\sH}(h, x, x')}} 
 & \leq \mathbb{E}_{(X, X')}\bracket*{\Psi\paren*{\tri*{\Delta\sC_{\sfL_2,\sH}(h, x, x')}_{\e}}}\\
 & \qquad +  \mathbb{E}_{(X, X')}\bracket*{\Psi\paren*{\Delta\sC_{\sfL_2,\sH}(h, x, x')\mathds{1}_{\Delta\sC_{\sfL_2,\sH}(h, x, x')\leq \e}}}.
\end{align*}
By assumption \eqref{eq:cond_Psi}, the first term can be bounded as follows:
\begin{align*}
     \mathbb{E}_{(X, X')}\bracket*{\Psi\paren*{\tri*{\Delta\sC_{\sfL_2,\sH}(h, x, x')}_{\e}}} 
     \leq \mathbb{E}_{(X, X')}\bracket*{\Delta\sC_{\sfL_1,\sH}(h, x, x')} 
     = \sR_{\sfL_1}(h) - \sR_{\sfL_1}^*(\sH)  + \sM_{\sfL_1}(\sH).
\end{align*}
Since $\Delta\sC_{\sfL_2,\sH}(h, x, x')\mathds{1}_{\Delta\sC_{\sfL_2,\sH}(h, x, x')\leq \e} \in [0,\e]$, we can bound \[\mathbb{E}_{(X, X')}\bigg[\Psi\bigg(\Delta\sC_{\sfL_2,\sH}(h, x, x')\mathds{1}_{\Delta\sC_{\sfL_2,\sH}(h, x, x')\leq \e}\bigg)\bigg]\] by $\sup_{t \in[0,\e]}\Psi(t)$, which equals $\max\curl*{\Psi(0),\Psi(\e)}$ due to the convexity of $\Psi$.
\end{proof}

\begin{restatable}{theorem}{ConBoundGamma}
\label{Thm:bound_Gamma}
Assume that there exists a non-decreasing concave function $\Gamma\colon
\mathbb{R_{ + }}\to \Rset$ and $\e\geq0$ such that the following holds
for all $h \in \sH$, $x \in \sX$, $x' \in \sX$ and $\sD \in \sP$:
\begin{equation}
\label{eq:cond_Gamma}
\tri*{\Delta\sC_{\sfL_2,\sH}(h, x, x')}_{\e}
\leq \Gamma \paren*{\tri*{\Delta\sC_{\sfL_1,\sH}(h, x, x')}_{\e}}.
\end{equation}
Then, the following inequality holds for any $h  \in \sH$ and $\sD \in \sP$:
    \begin{equation}
    \label{eq:bound_Gamma_general}
    \sR_{\sfL_2}(h) -  \sR_{\sfL_2}^*(\sH)
    \leq  \Gamma\paren*{\sR_{\sfL_1}(h) - \sR_{\sfL_1}^*(\sH) + \sM_{\sfL_1}(\sH)} - \sM_{\sfL_2}(\sH) + \e.
    \end{equation}
\end{restatable}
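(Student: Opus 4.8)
The plan is to mirror the proof of Theorem~\ref{Thm:bound_Psi}, replacing the convexity argument by a concavity argument so that Jensen's inequality is applied in the opposite direction, and handling the $\e$-truncation by using the monotonicity of $\Gamma$.

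First I would rewrite the target quantity in terms of calibration gaps. By the definitions of the generalization error and of the minimizability gap, $\E_{(X, X')}\bracket*{\sC_{\sfL_2}^*(\sH, x, x')} = \sR_{\sfL_2}^*(\sH) - \sM_{\sfL_2}(\sH)$, so that
\[
\sR_{\sfL_2}(h) - \sR_{\sfL_2}^*(\sH) = \E_{(X, X')}\bracket*{\Delta\sC_{\sfL_2,\sH}(h, x, x')} - \sM_{\sfL_2}(\sH).
\]
It therefore suffices to show $\E_{(X, X')}\bracket*{\Delta\sC_{\sfL_2,\sH}(h, x, x')} \leq \Gamma\paren*{\sR_{\sfL_1}(h) - \sR_{\sfL_1}^*(\sH) + \sM_{\sfL_1}(\sH)} + \e$.

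Next I would split the integrand via the truncation decomposition
\[
\Delta\sC_{\sfL_2,\sH}(h, x, x') = \tri*{\Delta\sC_{\sfL_2,\sH}(h, x, x')}_{\e} + \Delta\sC_{\sfL_2,\sH}(h, x, x')\1_{\Delta\sC_{\sfL_2,\sH}(h, x, x') \leq \e}.
\]
The second term lies in $[0, \e]$, since calibration gaps are non-negative by super-additivity of the infimum, so its expectation is at most $\e$. For the first term, assumption~\eqref{eq:cond_Gamma} gives pointwise $\tri*{\Delta\sC_{\sfL_2,\sH}(h, x, x')}_{\e} \leq \Gamma\paren*{\tri*{\Delta\sC_{\sfL_1,\sH}(h, x, x')}_{\e}}$; using $\tri*{t}_{\e} \leq t$ for $t \geq 0$ together with the monotonicity of $\Gamma$, this is at most $\Gamma\paren*{\Delta\sC_{\sfL_1,\sH}(h, x, x')}$. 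Taking expectations and applying Jensen's inequality for the concave function $\Gamma$,
\[
\E_{(X, X')}\bracket*{\tri*{\Delta\sC_{\sfL_2,\sH}(h, x, x')}_{\e}} \leq \E_{(X, X')}\bracket*{\Gamma\paren*{\Delta\sC_{\sfL_1,\sH}(h, x, x')}} \leq \Gamma\paren*{\E_{(X, X')}\bracket*{\Delta\sC_{\sfL_1,\sH}(h, x, x')}}.
\]
Recognizing $\E_{(X, X')}\bracket*{\Delta\sC_{\sfL_1,\sH}(h, x, x')} = \sR_{\sfL_1}(h) - \sR_{\sfL_1}^*(\sH) + \sM_{\sfL_1}(\sH)$ and adding the two contributions gives the bound.

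The argument is essentially routine; the only points needing care are getting the direction of Jensen's inequality right — concavity of $\Gamma$ yields $\E[\Gamma(\cdot)] \leq \Gamma(\E[\cdot])$, which is exactly what is required here, in contrast with the convex case of Theorem~\ref{Thm:bound_Psi} — and justifying the removal of the $\e$-truncation inside $\Gamma$, which relies jointly on the non-negativity of the target calibration gap and the monotonicity of $\Gamma$ (with continuity of $\Gamma$ and boundedness of the target calibration gap ensuring integrability so that Jensen applies). Unlike in Theorem~\ref{Thm:bound_Psi}, no assumption on $\Gamma$ at $0$ is needed, because the small part of the calibration gap is bounded directly by $\e$ rather than pushed through $\Gamma$; this is precisely the source of the additive $+\e$ term in the conclusion.
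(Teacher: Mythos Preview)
Your proposal is correct and follows essentially the same approach as the paper: rewrite the estimation error via calibration gaps and the minimizability gap, split by the $\e$-truncation, bound the truncated part through assumption~\eqref{eq:cond_Gamma}, monotonicity of $\Gamma$, and Jensen's inequality for concave functions, and bound the residual directly by $\e$. Your justification for dropping the truncation inside $\Gamma$ (using $\tri*{t}_{\e}\leq t$ together with monotonicity) is in fact spelled out more explicitly than in the paper.
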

\begin{proof}
By the definition of the generalization error and the minimizability
gap, for any $h \in \sH$ and $\sD \in \sP$, we can write the left hand side of \eqref{eq:bound_Gamma_general}
as
\begin{align*}
&\sR_{\sfL_2}(h) -  \sR_{\sfL_2}^*(\sH) \\
&  =  \mathbb{E}_{(X, X')}\bracket*{\Delta\sC_{\sfL_2,\sH}(h, x, x')} - \sM_{\sfL_2}(\sH) \\
& =   \mathbb{E}_{(X, X')}\bracket*{\tri*{\Delta\sC_{\sfL_2,\sH}(h, x, x')}_{\e}} + \mathbb{E}_{(X, X')}\bracket*{\Delta\sC_{\sfL_2,\sH}(h, x, x')\mathds{1}_{\Delta\sC_{\sfL_2,\sH}(h, x, x')\leq\e}} - \sM_{\sfL_2}(\sH)\\
\end{align*}
By assumption \eqref{eq:cond_Gamma} and that $\Gamma$ is non-decreasing, the following inequality holds:
\[
\mathbb{E}_{(X, X')}\bracket*{\tri*{\Delta\sC_{\sfL_2,\sH}(h, x, x')}_{\e}} \leq \mathbb{E}_{(X, X')}\bracket*{\Gamma \paren*{\Delta\sC_{\sfL_1,\sH}(h, x, x')}}.
\]
Since $\Gamma$ is concave, by Jensen’s  inequality, 
\begin{align*}
\mathbb{E}_{(X, X')}\bracket*{\Gamma \paren*{\Delta\sC_{\sfL_1,\sH}(h, x, x')}} 
&\leq 
\Gamma\paren*{\mathbb{E}_{(X, X')}\bracket*{\Delta\sC_{\sfL_1,\sH}(h, x, x')}}\\
&= \Gamma\paren*{\sR_{\sfL_1}(h) - \sR_{\sfL_1}^*(\sH) + \sM_{\sfL_1}(\sH)}.
\end{align*}
We complete the proof by noting that $\mathbb{E}_{(X, X')}\bracket*{\Delta\sC_{\sfL_2,\sH}(h, x, x')\mathds{1}_{\Delta\sC_{\sfL_2,\sH}(h, x, x')\leq\e}} \leq \e$.
\end{proof}

\section{\texorpdfstring{$\sH$}{H}-consistency bounds for general pairwise ranking with abstention (Proof of Theorem~\ref{thm:general-bound})}
\label{app:abstention-general}

We first characterize the minimal conditional $\labs$-risk and the
calibration gap of $\labs$ for a broad class of hypothesis sets. We
let $\ov \sH(x, x')  =  \curl[big]{h  \in \sH \colon \sign(h(x') - h(x))
  \paren*{2\eta(x, x') - 1} \leq 0}$ for convenience.

\begin{restatable}{lemma}{CalibrationGapAbsGeneral}
\label{lemma:calibration-gap-abs-general}
Assume that $\sH$ is regular for general  pairwise  ranking.
Then, the minimal conditional $\labs$-risk is
\begin{align*}
\sC^*_{\labs}(\sH,x, x') = \min\curl*{\eta(x, x'), 1 - \eta(x, x')}\1_{\norm*{x - x'}  >  \gamma} +  c\, \1_{\abs*{x - x'} \leq \gamma}.
\end{align*}
The calibration gap of $\labs$ can be characterized as
\begin{align*}
 \Delta\sC_{\labs,\sH}(h, x, x') = \abs*{2\eta(x, x') - 1}\mathds{1}_{h  \in \ov \sH(x, x')}\1_{\norm*{x - x'}  >  \gamma}.
\end{align*}
\end{restatable}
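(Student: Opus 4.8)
The plan is to compute the conditional $\labs$-risk $\sC_{\labs}(h,x,x')$ in closed form, take its infimum over $h\in\sH$, and subtract, treating separately the two regimes $\norm*{x-x'}\le\gamma$ and $\norm*{x-x'}>\gamma$ according to which term of \eqref{eq:abs-general} is active. The easy regime is $\norm*{x-x'}\le\gamma$: here $\labs(h,x,x',y)=c$ for every $h$ and every $y$, so $\sC_{\labs}(h,x,x')=c$ independently of $h$; hence $\sC^*_{\labs}(\sH,x,x')=c$ and $\Delta\sC_{\labs,\sH}(h,x,x')=0$, which matches the second term of both claimed formulas on this set.

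In the regime $\norm*{x-x'}>\gamma$ — where $\gamma\ge 0$ forces $x\ne x'$, so regularity is legitimately applicable — the loss reduces to $\lrank(h,x,x',y)=\1_{y\ne\sign(h(x')-h(x))}$. Writing $s=\sign(h(x')-h(x))\in\curl*{-1,+1}$, a two-line computation gives $\sC_{\labs}(h,x,x')=1-\eta(x,x')$ when $s=+1$ and $\sC_{\labs}(h,x,x')=\eta(x,x')$ when $s=-1$; in both cases this equals $\tfrac12-\tfrac12 s\,(2\eta(x,x')-1)$. Since $\sH$ is regular for general pairwise ranking, $\curl*{\sign(h(x')-h(x))\colon h\in\sH}=\curl*{-1,+1}$, so both values $\eta(x,x')$ and $1-\eta(x,x')$ are attained and every $h$ yields a value at least $\min\curl*{\eta(x,x'),1-\eta(x,x')}$; therefore $\sC^*_{\labs}(\sH,x,x')=\min\curl*{\eta(x,x'),1-\eta(x,x')}$, which is the first claim.

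For the calibration gap in this regime, I would subtract: $\Delta\sC_{\labs,\sH}(h,x,x')=\tfrac12-\tfrac12 s\,(2\eta(x,x')-1)-\min\curl*{\eta(x,x'),1-\eta(x,x')}$. A case split on the sign of $2\eta(x,x')-1$ shows this equals $0$ when $s\,(2\eta(x,x')-1)>0$ and equals $\abs*{2\eta(x,x')-1}$ when $s\,(2\eta(x,x')-1)\le 0$; the latter condition is by definition $h\in\ov\sH(x,x')$, and the case $\eta(x,x')=\tfrac12$ falls under it with gap $0=\abs*{2\eta(x,x')-1}$, so there is no inconsistency at the boundary. Hence on $\norm*{x-x'}>\gamma$ the gap is $\abs*{2\eta(x,x')-1}\,\1_{h\in\ov\sH(x,x')}$, and combining with the easy regime gives the stated formula.

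There is no real obstacle here: the argument is a finite case analysis on the sign $s$ and on the position of $\eta(x,x')$ relative to $\tfrac12$. The only points that need a little care are the convention $\sign(0)=1$ (so that $s$ is always $\pm1$ and the event $h(x)=h(x')$ is subsumed under $s=+1$), the boundary case $\eta(x,x')=\tfrac12$ where $\ov\sH(x,x')$ is defined by a non-strict inequality yet the gap is still $0$, and the observation that regularity is invoked only for $x\ne x'$, which indeed holds throughout the regime $\norm*{x-x'}>\gamma$.
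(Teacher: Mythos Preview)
Your proposal is correct and follows essentially the same approach as the paper's proof: split into the regimes $\norm*{x-x'}\le\gamma$ and $\norm*{x-x'}>\gamma$, compute the conditional risk explicitly in each, invoke regularity to identify the infimum in the non-abstention regime, and subtract to obtain the calibration gap via a case analysis on the sign of $h(x')-h(x)$ relative to $2\eta(x,x')-1$. Your use of $s=\sign(h(x')-h(x))$ and the compact form $\tfrac12-\tfrac12 s(2\eta-1)$ is a slight notational streamlining, and your explicit attention to the boundary cases ($\sign(0)=+1$, $\eta=\tfrac12$, and $x\ne x'$ for regularity) is more careful than the paper's own treatment, but the argument is the same.
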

\begin{proof}
By the definition, the conditional $\labs$-risk is 
\begin{align*}
\sC_{\labs}(h, x, x') 
&  =   \paren*{\eta(x, x')\mathds{1}_{h(x') <  h(x)} + (1 -  \eta(x, x'))\mathds{1}_{h(x') \geq h(x)}}\1_{\norm*{x - x'}  >  \gamma} +  c\, \1_{\abs*{x - x'} \leq \gamma}.
\end{align*}
For any $(x, x')$ such that $\norm*{x - x'}\leq\gamma$ and $h \in \sH$,
$\sC_{\labs}(h, x, x) = \sC^*_{\labs}(\sH,x, x) = c$. For any $(x, x')$ such
that $\norm*{x - x'} > \gamma$, by the assumption, there exists $h^* \in
\sH$ such that
$\sign(h^*(x') - h^*(x)) = \sign\paren*{2\eta(x, x') - 1}$. Therefore, the
optimal conditional $\labs$-risk can be characterized as for any
$x, x' \in \sX$,
\begin{align*}
\sC^*_{\labs}(\sH,x, x') =  \sC_{\labs}\paren*{h^*,x, x'} = \min\curl*{\eta(x, x'), 1 - \eta(x, x')}\1_{\norm*{x - x'}  >  \gamma} +  c\, \1_{\abs*{x - x'} \leq \gamma}.
\end{align*}
which proves the first part of lemma. By the definition, for any
$(x, x')$ such that $\norm*{x - x'}\leq\gamma$ and $h \in \sH$,
$\Delta\sC_{\labs,\sH}(h, x, x') = \sC_{\labs}(h, x, x') -
\sC^*_{\labs}(\sH,x, x') = 0$. . For any $(x, x')$ such that
$\norm*{x - x'} > \gamma$ and $h \in \sH$,
\begin{align*}
&\Delta\sC_{\labs,\sH}(h, x, x')\\
& = \sC_{\labs}(h, x, x') - \sC^*_{\labs}(\sH,x, x')\\
&  =  \eta(x, x')\mathds{1}_{h(x') <  h(x)} + (1 -  \eta(x, x'))\mathds{1}_{h(x') \geq h(x)} - \min\curl*{\eta(x, x'), 1 - \eta(x, x')}\\
&  =  
\begin{cases}
\abs*{2\eta(x, x') - 1},  & h  \in \ov\sH(x, x'), \\
0, & \text{otherwise} .
\end{cases}
\end{align*}
This leads to
\begin{align*}
\Delta\sC_{\labs,\sH}(h, x, x') = \abs*{2\eta(x, x') - 1}\mathds{1}_{h  \in \ov \sH(x, x')}\1_{\norm*{x - x'}  >  \gamma}.
\end{align*}
\end{proof}

\GeneralBound*
\begin{proof}
Since $\sH_{\mathrm{lin}}$ and $\sH_{\mathrm{NN}}$ satisfy the condition of
Lemma~\ref{lemma:calibration-gap-abs-general}, by
Lemma~\ref{lemma:calibration-gap-abs-general} the
$\paren*{\labs,\sH_{\mathrm{lin}}}$-minimizability gap and the $\paren*{\labs,\sH_{\mathrm{NN}}}$-minimizability gap  can be
expressed as follows:
\begin{align*}
\sM_{\labs}(\sH_{\mathrm{lin}})  &=  \sR_{\labs}^*(\sH_{\mathrm{lin}}) - \mathbb{E}_{(X, X')}\bracket*{\min\curl*{\eta(x, x'), 1 - \eta(x, x')}\1_{\norm*{x - x'}  >  \gamma} +  c\, \1_{\abs*{x - x'} \leq \gamma}}\\
\sM_{\labs}(\sH_{\mathrm{NN}})  &=  \sR_{\labs}^*(\sH_{\mathrm{NN}}) - \mathbb{E}_{(X, X')}\bracket*{\min\curl*{\eta(x, x'), 1 - \eta(x, x')}\1_{\norm*{x - x'}  >  \gamma} +  c\, \1_{\abs*{x - x'} \leq \gamma}.}.
\end{align*}
By the definition of $\sH_{\mathrm{lin}}$ and $\sH_{\mathrm{NN}}$, for any $(x, x')  \in \sX\times\sX$, $\curl[\big]{h(x') - h(x) \mid h  \in \sH_{\mathrm{lin}}}  =  \bracket*{-W \norm*{x - x'}_p, W\norm*{x - x'}_p}$ and $\curl[\big]{h(x') - h(x) \mid h  \in \sH_{\mathrm{NN}}}  = 
\bracket*{-\Lambda W \norm*{x - x'}_p, \Lambda W\norm*{x - x'}_p}$. In the following, we will prove the bounds for $\sH_{\mathrm{lin}}$. Similar proofs with $B$ replaced by $\Lambda B$ hold for $\sH_{\mathrm{NN}}$.

\paragraph{Proof for \texorpdfstring{$\sfL_{\Phi_{\mathrm{hinge}}}$}{hinge}.} For the hinge loss function $\Phi_{\mathrm{hinge}}(u)\colon =
\max\curl*{0, 1 - u}$, for all $h \in \sH_{\mathrm{lin}}$ and $(x,
x')$ such that $\norm*{x - x'}_p > \gamma$,
\begin{equation*}
\begin{aligned}
& \sC_{\sfL_{\Phi_{\mathrm{hinge}}}}(h, x, x')\\
&  = \eta(x, x') \sfL_{\Phi_{\mathrm{hinge}}}(h(x') - h(x)) + (1 - \eta(x, x'))\sfL_{\Phi_{\mathrm{hinge}}}(h(x) - h(x'))\\
&  = \eta(x, x')\max\curl*{0, 1 - h(x') + h(x)} + (1 - \eta(x, x'))\max\curl*{0, 1 + h(x') - h(x)}.
\end{aligned}
\end{equation*}
Then,
\begin{align*}
\sC^*_{\sfL_{\Phi_{\mathrm{hinge}}},\sH_{\mathrm{lin}}}(x, x') =  \inf_{h \in\sH_{\mathrm{lin}}}\sC_{\sfL_{\Phi_{\mathrm{hinge}}}}(h, x, x')
  =  1 - \abs*{2\eta(x, x') - 1}\min\curl*{W\norm*{x - x'}_p, 1}.
\end{align*}
The $\paren*{\sfL_{\Phi_{\mathrm{hinge}}},\sH_{\mathrm{lin}}}$-minimizability gap is
\begin{equation}
\begin{aligned}
\label{eq:M-hinge-lin}
\sM_{\sfL_{\Phi_{\mathrm{hinge}}}}(\sH_{\mathrm{lin}})
&  =  \sR_{\sfL_{\Phi_{\mathrm{hinge}}}}^*(\sH_{\mathrm{lin}}) - \mathbb{E}_{(X, X')}\bracket*{\sC^*_{\sfL_{\Phi_{\mathrm{hinge}}},\sH_{\mathrm{lin}}}(x, x')}\\
&  =  \sR_{\sfL_{\Phi_{\mathrm{hinge}}}}^*(\sH_{\mathrm{lin}}) - \mathbb{E}_{(X, X')}\bracket*{1 - \abs*{2\eta(x, x') - 1}\min\curl*{W\norm*{x - x'}_p, 1}}.
\end{aligned}
\end{equation}
Therefore, $\forall h  \in \ov\sH_{\mathrm{lin}}(x, x')$,
\begin{align*} &\Delta\sC_{\sfL_{\Phi_{\mathrm{hinge}}},\sH_{\mathrm{lin}}}(h, x, x')\\
& \geq  \inf_{h \in\ov\sH_{\mathrm{lin}}(x, x')}\sC_{\sfL_{\Phi_{\mathrm{hinge}}}}(h, x, x') - \sC^*_{\sfL_{\Phi_{\mathrm{hinge}}},\sH_{\mathrm{lin}}}(x, x')\\
&  =  \eta(x, x')\max\curl*{0, 1 - 0} + (1 - \eta(x, x'))\max\curl*{0, 1 + 0} - \sC^*_{\sfL_{\Phi_{\mathrm{hinge}}},\sH_{\mathrm{lin}}}(x, x')\\
&  =  1 -  \bracket*{1 - \abs*{2\eta(x, x') - 1}\min\curl*{W\norm*{x - x'}_p, 1}}\\
& = \abs*{2\eta(x, x') - 1}\min\curl*{W\norm*{x - x'}_p, 1}\\
& \geq \abs*{2\eta(x, x') - 1}\min\curl*{W\gamma, 1}
\end{align*}
which implies that for any $h \in \sH_{\mathrm{lin}}$ and $(x, x')$ such that $\norm*{x - x'}_p > \gamma$,
\begin{align*}
\Delta\sC_{\sfL_{\Phi_{\mathrm{hinge}}},\sH_{\mathrm{lin}}}(h, x, x') \geq \min\curl*{W\gamma, 1}\tri*{\abs*{2\eta(x, x') - 1}}_{0}\mathds{1}_{h  \in \ov \sH_{\mathrm{lin}}(x, x')} = \Delta\sC_{\labs,\sH_{\mathrm{lin}}}(h, x, x').
\end{align*}
Thus, by Theorem~\ref{Thm:bound_Psi} or Theorem~\ref{Thm:bound_Gamma}, setting $\e  =  0$ yields the $\sH_{\mathrm{lin}}$-consistency bound for $\sfL_{\Phi_{\mathrm{hinge}}}$, valid for all $h  \in \sH_{\mathrm{lin}}$:
\begin{align}
\label{eq:bound-hinge-lin}
     \sR_{\labs}(h) -  \sR_{\labs}^*(\sH_{\mathrm{lin}})
     \leq \frac{\sR_{\sfL_{\Phi_{\mathrm{hinge}}}}(h) -  \sR_{\sfL_{\Phi_{\mathrm{hinge}}}}^*(\sH_{\mathrm{lin}}) + \sM_{\sfL_{\Phi_{\mathrm{hinge}}}}(\sH_{\mathrm{lin}})}{\min\curl*{W\gamma, 1}} - \sM_{\labs}(\sH_{\mathrm{lin}}).
\end{align}

\paragraph{Proof for \texorpdfstring{$\sfL_{\Phi_{\mathrm{exp}}}$}{exp}.} For the exponential loss function $\Phi_{\mathrm{exp}}(u)\colon =
e^{-u}$, for all $h \in \sH_{\mathrm{lin}}$ and $(x, x')$ such that
$\norm*{x - x'}_p > \gamma$,
\begin{equation*}
\begin{aligned}
\sC_{\sfL_{\Phi_{\mathrm{exp}}}}(h, x, x')
& = \eta(x, x') \sfL_{\Phi_{\mathrm{exp}}}(h(x') - h(x)) + (1 - \eta(x, x'))\sfL_{\Phi_{\mathrm{exp}}}(h(x) - h(x'))\\
& = \eta(x, x') e^{-h(x') + h(x)} + (1 - \eta(x, x'))e^{h(x') - h(x)}.
\end{aligned}
\end{equation*}
Then, 
\begin{align*}
&\sC^*_{\sfL_{\Phi_{\mathrm{exp}}}, \sH_{\mathrm{lin}}}(x, x')\\
& =   \inf_{h \in\sH_{\mathrm{lin}}}\sC_{\sfL_{\Phi_{\mathrm{exp}}}}(h, x, x')\\
& =  
\begin{cases}
2\sqrt{\eta(x, x')(1 - \eta(x, x'))}  \\ \text{if } \frac{1}{2}\abs*{\log\frac{\eta(x, x')}{1 - \eta(x, x')}}\leq W\norm*{x - x'}_p\\
\max\curl*{\eta(x, x'), 1 - \eta(x, x')}e^{-W\norm*{x - x'}_p} + \min\curl*{\eta(x, x'), 1 - \eta(x, x')}e^{W\norm*{x - x'}_p} \\ \text{if } \frac{1}{2}\abs*{\log\frac{\eta(x, x')}{1 - \eta(x, x')}} >  W\norm*{x - x'}_p.
\end{cases}
\end{align*}
The $\paren*{\sfL_{\Phi_{\mathrm{exp}}},\sH_{\mathrm{lin}}}$-minimizability gap is:
\begin{equation}
\begin{aligned}
\label{eq:M-exp-lin}
\sM_{\sfL_{\Phi_{\mathrm{exp}}}}(\sH_{\mathrm{lin}})
&  =  \sR_{\sfL_{\Phi_{\mathrm{exp}}}}^*(\sH_{\mathrm{lin}}) - 
\mathbb{E}_{(X, X')}\bracket*{\sC^*_{\sfL_{\Phi_{\mathrm{exp}}}, \sH_{\mathrm{lin}}}(x, x')}\\
&  =  \sR_{\sfL_{\Phi_{\mathrm{exp}}}}^*(\sH_{\mathrm{lin}}) - 
\mathbb{E}_{(X, X')}\bracket*{2\sqrt{\eta(x, x')(1 - \eta(x, x'))}\1_{\frac{1}{2}\abs*{\log\frac{\eta(x, x')}{1 - \eta(x, x')}}\leq W\norm*{x - x'}_p}}\\
& \qquad - \mathbb{E}_{(X, X')}\bracket*{\max\curl*{\eta(x, x'), 1 - \eta(x, x')}e^{-W\norm*{x - x'}_p}\1_{\frac{1}{2}\abs*{\log\frac{\eta(x, x')}{1 - \eta(x, x')}} >  W\norm*{x - x'}_p}}\\
& \quad \qquad - \mathbb{E}_{(X, X')}\bracket*{\min\curl*{\eta(x, x'), 1 - \eta(x, x')}e^{W\norm*{x - x'}_p}\1_{\frac{1}{2}\abs*{\log\frac{\eta(x, x')}{1 - \eta(x, x')}} >  W\norm*{x - x'}_p}}.
\end{aligned}
\end{equation}
Therefore, $\forall h  \in \ov\sH_{\mathrm{lin}}(x, x')$,
\begin{align*} 
&\Delta\sC_{\sfL_{\Phi_{\mathrm{exp}}},\sH_{\mathrm{lin}}}(h, x, x')\\
& \geq  \inf_{h \in\ov\sH_{\mathrm{lin}}(x, x')}\sC_{\sfL_{\Phi_{\mathrm{exp}}}}(h, x, x') - \sC^*_{\sfL_{\Phi_{\mathrm{exp}}}, \sH_{\mathrm{lin}}}(x, x')\\
&  =  \eta(x, x')e^{-0} + (1 - \eta(x, x'))e^{0} - \sC^*_{\sfL_{\Phi_{\mathrm{exp}}}, \sH_{\mathrm{lin}}}(x, x')\\
&  =  
\begin{cases}
1 -  2\sqrt{\eta(x, x')(1 - \eta(x, x'))} \\ \text{if }\frac{1}{2}\abs*{\log\frac{\eta(x, x')}{1 - \eta(x, x')}}\leq W\norm*{x - x'}_p\\
1 -  \max\curl*{\eta(x, x'), 1 - \eta(x, x')}e^{-W\norm*{x - x'}_p} - \min\curl*{\eta(x, x'), 1 - \eta(x, x')}e^{W\norm*{x - x'}_p} \\ \text{if }\frac{1}{2}\abs*{\log\frac{\eta(x, x')}{1 - \eta(x, x')}} >  W\norm*{x - x'}_p
\end{cases}\\
& \geq 
\begin{cases}
1 -  2\sqrt{\eta(x, x')(1 - \eta(x, x'))} \\ \text{if } \frac{1}{2}\abs*{\log\frac{\eta(x, x')}{1 - \eta(x, x')}}\leq W\gamma\\
1 -  \max\curl*{\eta(x, x'), 1 - \eta(x, x')}e^{-W\gamma} - \min\curl*{\eta(x, x'), 1 - \eta(x, x')}e^{W\gamma} \\ \text{if } \frac{1}{2}\abs*{\log\frac{\eta(x, x')}{1 - \eta(x, x')}} >  W\gamma
\end{cases}\\
&  =  \Psi_{\rm{exp}} \paren*{\abs*{2\eta(x, x') - 1}},
\end{align*}
where $\Psi_{\rm{exp}}$ is the increasing and convex function on $[0, 1]$ defined by
\begin{align*}
\forall t \in[0, 1], \quad 
\Psi_{\rm{exp}}(t) = \begin{cases}
1 - \sqrt{1 - t^2}, & t\leq \frac{e^{2W\gamma} - 1}{e^{2W\gamma} + 1}\\
1 - \frac{t + 1}{2}e^{-W\gamma} - \frac{1 - t}{2}e^{W\gamma}, & t >  \frac{e^{2W\gamma} - 1}{e^{2W\gamma} + 1}
\end{cases}
\end{align*}
which implies that for any $h \in \sH_{\mathrm{lin}}$ and $(x, x')$ such that $\norm*{x - x'}_p > \gamma$,
\begin{align*}
\Delta\sC_{\sfL_{\Phi_{\mathrm{exp}}},\sH_{\mathrm{lin}}}(h, x, x') \geq \Psi_{\rm{exp}}\paren*{\Delta\sC_{\labs,\sH_{\mathrm{lin}}}(h, x, x')}.
\end{align*}
To simplify the expression, using the fact that
\begin{align*}
1 -  \sqrt{1 - t^2} & \geq \frac{t^2}{2}, \\
1 - \frac{t + 1}{2}e^{-W\gamma} - \frac{1 - t}{2}e^{W\gamma} &  =  1 - \frac{e^{W\gamma}}2 - \frac{e^{-W\gamma}}2 + \frac{e^{W\gamma} - e^{-W\gamma}}2\, t,
\end{align*}
$\Psi_{\rm{exp}}$ can be lower bounded by
\begin{align*}
\wt\Psi_{\rm{exp}}(t) =  \begin{cases}
\frac{t^2}{2},& t\leq \frac{e^{2W\gamma} - 1}{e^{2W\gamma} + 1}\\
\frac{1}{2}\paren*{\frac{e^{2W\gamma} - 1}{e^{2W\gamma} + 1}}\, t, & t >  \frac{e^{2W\gamma} - 1}{e^{2W\gamma} + 1}.
\end{cases}   
\end{align*}
Thus, we adopt an upper bound of $\Psi^{-1}$ as follows:
\begin{align*}
\Gamma_{\Phi_{\mathrm{exp}}}(t) = \wt\Psi_{\rm{exp}}^{-1}(t)& = 
\begin{cases}
\sqrt{2t}, & t\leq \frac{1}{2}\paren*{\frac{e^{2W\gamma} - 1}{e^{2W\gamma} + 1}}^2\\
2\paren*{\frac{e^{2W\gamma} + 1}{e^{2W\gamma} - 1}}\, t, & t >  \frac{1}{2}\paren*{\frac{e^{2W\gamma} - 1}{e^{2W\gamma} + 1}}^2
\end{cases}\\
& = \max\curl*{\sqrt{2t},2\paren*{\frac{e^{2W\gamma} + 1}{e^{2W\gamma} - 1}}\, t}.
\end{align*}
Thus, by Theorem~\ref{Thm:bound_Psi} or Theorem~\ref{Thm:bound_Gamma}, setting $\e  =  0$ yields the $\sH_{\mathrm{lin}}$-consistency bound for $\sfL_{\Phi_{\mathrm{exp}}}$, valid for all $h  \in \sH_{\mathrm{lin}}$:
\begin{align}
\label{eq:bound-exp-lin}
     \sR_{\labs}(h) -  \sR_{\labs}^*(\sH_{\mathrm{lin}})
     \leq  \Gamma_{\Phi_{\mathrm{exp}}}\paren*{\sR_{\sfL_{\Phi_{\mathrm{exp}}}}(h) -  \sR_{\sfL_{\Phi_{\mathrm{exp}}}}^*(\sH_{\mathrm{lin}}) + \sM_{\sfL_{\Phi_{\mathrm{exp}}}}(\sH_{\mathrm{lin}})} - \sM_{\labs}(\sH_{\mathrm{lin}}).
\end{align}
where $\Gamma_{\Phi_{\mathrm{exp}}}(t) = \max\curl*{\sqrt{2t},2\paren*{\frac{e^{2W\gamma} + 1}{e^{2W\gamma} - 1}}\, t}$.

\paragraph{Proof for \texorpdfstring{$\sfL_{\Phi_{\mathrm{sig}}}$}{sig}.} For the sigmoid loss function $\Phi_{\mathrm{sig}}(u)\colon = 1 -
\tanh(k u),~k > 0$, for all $h \in \sH_{\mathrm{lin}}$ and $(x, x')$
such that $\norm*{x - x'}_p > \gamma$,
\begin{equation*}
\begin{aligned}
& \sC_{\sfL_{\Phi_{\mathrm{sig}}}}(h, x, x')\\
& = \eta(x, x') \sfL_{\Phi_{\mathrm{sig}}}(h(x') - h(x)) + (1 - \eta(x, x'))\sfL_{\Phi_{\mathrm{sig}}}(h(x) - h(x'))\\
& = \eta(x, x') \paren*{1 - \tanh\paren*{k\bracket*{h(x') - h(x)}}} + (1 - \eta(x, x'))\paren*{1 + \tanh\paren*{k\bracket*{h(x') - h(x)}}}.
\end{aligned}
\end{equation*}
Then, 
\begin{align*}
\sC^*_{\sfL_{\Phi_{\mathrm{sig}}}}(\sH_{\mathrm{lin}})(x, x')
&  =   \inf_{h \in\sH_{\mathrm{lin}}}\sC_{\sfL_{\Phi_{\mathrm{sig}}}}(h, x, x') = 1 - \abs*{1 - 2\eta(x, x')}\tanh\paren*{kW\norm*{x - x'}_p}.
\end{align*}
The $\paren*{\sfL_{\Phi_{\mathrm{sig}}},\sH_{\mathrm{lin}}}$-minimizability gap is:
\begin{equation}
\begin{aligned}
\label{eq:M-sig-lin}
\sM_{\sfL_{\Phi_{\mathrm{sig}}}}(\sH_{\mathrm{lin}})
&  =  \sR_{\sfL_{\Phi_{\mathrm{sig}}}}^*(\sH_{\mathrm{lin}}) - 
\mathbb{E}_{(X, X')}\bracket*{\sC^*_{\sfL_{\Phi_{\mathrm{sig}}}, \sH_{\mathrm{lin}}}(x, x')}\\
&  =  \sR_{\sfL_{\Phi_{\mathrm{sig}}}}^*(\sH_{\mathrm{lin}}) - 
\mathbb{E}_{(X, X')}\bracket*{1 - \abs*{1 - 2\eta(x, x')}\tanh\paren*{kW\norm*{x - x'}_p}}.
\end{aligned}
\end{equation}
Therefore, $\forall h  \in \ov\sH_{\mathrm{lin}}(x, x')$,
\begin{align*} \Delta\sC_{\sfL_{\Phi_{\mathrm{sig}}},\sH_{\mathrm{lin}}}(h, x, x')
& \geq  \inf_{h \in\ov\sH_{\mathrm{lin}}(x, x')}\sC_{\sfL_{\Phi_{\mathrm{sig}}}}(h, x, x') - \sC^*_{\sfL_{\Phi_{\mathrm{sig}}}, \sH_{\mathrm{lin}}}(x, x')\\
&  = 1 - \abs*{1 - 2\eta(x, x')}\tanh(0) -  \sC^*_{\sfL_{\Phi_{\mathrm{sig}}}, \sH_{\mathrm{lin}}}(x, x')\\
&  =  \abs*{1 - 2\eta(x, x')}\tanh\paren*{kW\norm*{x - x'}_p}\\
& \geq \abs*{1 - 2\eta(x, x')}\tanh\paren*{kW\gamma}
\end{align*}
which implies that for any $h \in \sH_{\mathrm{lin}}$ and $(x, x')$ such that $\norm*{x - x'}_p > \gamma$,
\begin{align*}
\Delta\sC_{\sfL_{\Phi_{\mathrm{sig}}},\sH_{\mathrm{lin}}}(h, x, x') \geq \tanh\paren*{kW\gamma}\Delta\sC_{\labs,\sH_{\mathrm{lin}}}(h, x, x').
\end{align*}
Thus, by Theorem~\ref{Thm:bound_Psi} or Theorem~\ref{Thm:bound_Gamma}, setting $\e  =  0$ yields the $\sH_{\mathrm{lin}}$-consistency bound for $\sfL_{\Phi_{\mathrm{sig}}}$, valid for all $h  \in \sH_{\mathrm{lin}}$:
\begin{align}
\label{eq:bound-sig-lin}
     \sR_{\labs}(h) -  \sR_{\labs}^*(\sH_{\mathrm{lin}})
     \leq \frac{\sR_{\sfL_{\Phi_{\mathrm{sig}}}}(h) -  \sR_{\sfL_{\Phi_{\mathrm{sig}}}}^*(\sH_{\mathrm{lin}}) + \sM_{\sfL_{\Phi_{\mathrm{sig}}}}(\sH_{\mathrm{lin}})}{\tanh\paren*{kW\gamma}} - \sM_{\labs}(\sH_{\mathrm{lin}}).
\end{align}
\end{proof}

\section{\texorpdfstring{$\sH$}{H}-consistency bounds for bipartite ranking with abstention (Proof of Theorem~\ref{thm:bi-bound})}
\label{app:abstention-bi}
We first characterize the minimal conditional $\labsbi$-risk and the calibration gap of $\labsbi$ for a broad class of hypothesis sets. We
let $\wt \sH(x, x')  = 
\curl[big]{h  \in \sH \colon (h(x) - h(x'))(\eta(x) - \eta(x')) <  0}$ and $\mathring{\sH}(x, x')  = 
\curl[big]{h  \in \sH\colon h(x) = h(x')}$ for convenience.

\begin{restatable}{lemma}{CalibrationGapAbsBi}
\label{lemma:calibration-gap-abs-bi}
Assume that $\sH$ is regular for bipartite ranking.
Then, the minimal conditional $\labsbi$-risk is
\begin{align*}
\sC^*_{\labsbi}(\sH,x, x') = \min\curl*{\eta(x)(1 - \eta(x')),\eta(x')(1 - \eta(x))}\1_{\norm*{x - x'}  >  \gamma} +  c\, \1_{\abs*{x - x'} \leq \gamma}.
\end{align*}
The calibration gap of $\labsbi$ can be characterized as
\begin{align*}
\Delta\sC_{\labsbi,\sH}(h, x, x')   = \abs*{\eta(x) - \eta(x')}\mathds{1}_{h  \in \wt \sH(x, x')}\1_{\norm*{x - x'}  >  \gamma}  + \frac{1}{2}\abs*{\eta(x) - \eta(x')}\mathds{1}_{h  \in \mathring{\sH}(x, x')}\1_{\norm*{x - x'}  >  \gamma}.
\end{align*}
\end{restatable}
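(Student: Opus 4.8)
The strategy mirrors the proof of Lemma~\ref{lemma:calibration-gap-abs-general}, the extra ingredients being that $x$ and $x'$ now carry independent conditional probabilities $\eta(x)$, $\eta(x')$ and that $\lrankbi$ contains a tie term. First I would expand $\sC_{\labsbi}(h, x, x')$ from the definition of the conditional $\lbi$-risk together with \eqref{eq:abs-bi} and \eqref{eq:bm-loss}. On the region $\norm*{x - x'} \le \gamma$ the loss equals the constant $c$, so $\sC_{\labsbi}(h, x, x') = c$ for every $h$ and both asserted identities hold there trivially, with zero calibration gap. On the region $\norm*{x - x'} > \gamma$, writing $a = \eta(x)\paren*{1 - \eta(x')}$ and $b = \eta(x')\paren*{1 - \eta(x)}$ and evaluating $\lrankbi$ at $(y, y') = (+1, -1)$ and $(y, y') = (-1, +1)$ shows that $\sC_{\labsbi}(h, x, x')$ equals $a$ when $h(x') > h(x)$, equals $b$ when $h(x') < h(x)$, and equals $\tfrac{a + b}{2}$ when $h(x') = h(x)$.

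Next I would compute the minimal conditional risk. Since $\tfrac{a+b}{2} \ge \min\curl*{a, b}$, the infimum over $h \in \sH$ of $\sC_{\labsbi}(h, x, x')$ on the region $\norm*{x - x'} > \gamma$ equals $\min\curl*{a, b} = \min\curl*{\eta(x)\paren*{1-\eta(x')}, \eta(x')\paren*{1-\eta(x)}}$, provided that both orderings $h(x') > h(x)$ and $h(x') < h(x)$ are realized within $\sH$; this is exactly what the assumption that $\sH$ is regular for bipartite ranking provides, through the hypotheses $h_{+}$ and $h_{-}$. Combined with the trivial region, this yields the first displayed formula.

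Finally, for the calibration gap I would subtract $\min\curl*{a, b}$ from the three-case expression for $\sC_{\labsbi}(h, x, x')$ on $\norm*{x - x'} > \gamma$, using the identity $a - b = \eta(x) - \eta(x')$, which collapses the product structure to a plain difference. When $h(x') > h(x)$ the gap is $a - \min\curl*{a, b} = (\eta(x) - \eta(x'))_{+}$, which is nonzero precisely when $h \in \wt\sH(x, x')$ and then equals $\abs*{\eta(x) - \eta(x')}$; the case $h(x') < h(x)$ is symmetric; and when $h(x') = h(x)$, i.e.\ $h \in \mathring{\sH}(x, x')$, the gap is $\tfrac{a+b}{2} - \min\curl*{a, b} = \tfrac12\abs*{\eta(x) - \eta(x')}$. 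Reassembling the three cases and multiplying by $\1_{\norm*{x - x'} > \gamma}$ gives the second formula. The main obstacle is this case analysis: one must track how the direction of the inequality between $a$ and $b$ flips as $h(x)$ and $h(x')$ trade places and check it agrees with the sign condition $(h(x) - h(x'))(\eta(x) - \eta(x')) < 0$ defining $\wt\sH(x, x')$, and the tie case must be handled separately since it is what contributes the $\tfrac12$ coefficient attached to $\mathring{\sH}(x, x')$.
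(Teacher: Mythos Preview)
Your proposal is correct and follows essentially the same approach as the paper's own proof: expand the conditional risk, split on $\norm{x-x'}\le\gamma$ versus $\norm{x-x'}>\gamma$, use regularity to realize the minimizing ordering, and then do the three-case subtraction using $a-b=\eta(x)-\eta(x')$. The only cosmetic difference is your shorthand $a,b$; the paper instead writes the full expressions and the identity $\abs*{\eta(x)(1-\eta(x'))-\eta(x')(1-\eta(x))}=\abs*{\eta(x)-\eta(x')}$ explicitly.
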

\begin{proof}
By the definition, the conditional $\labsbi$-risk is 
\begin{align*}
\sC_{\labsbi}(h, x, x') 
&=   \bigg(\eta(x)(1 - \eta(x')) \bracket*{\mathds{1}_{h(x) - h(x') < 0} + \frac{1}{2}\mathds{1}_{h(x) = h(x')}}\\
& \qquad +\eta(x')(1 - \eta(x)) \bracket*{\mathds{1}_{h(x) - h(x') > 0} + \frac{1}{2}\mathds{1}_{h(x) = h(x')}}\bigg)\1_{\norm*{x - x'}  >  \gamma}
 +  c\, \1_{\abs*{x - x'} \leq \gamma}.
\end{align*}
For any $(x, x')$ such that $\norm*{x - x'}\leq\gamma$ and $h \in \sH$, $\sC_{\labsbi}(h, x, x) = \sC^*_{\labsbi}(\sH,x, x) = c$. For any $(x, x')$ such that $\norm*{x - x'} > \gamma$, by the assumption, there exists $h^* \in \sH$ such that
\begin{align*}
\paren*{h^*(x) - h^*(x')}\paren*{\eta(x) - \eta(x')}\mathds{1}_{\eta(x) \neq \eta(x')} > 0.
\end{align*}
Therefore, the optimal conditional $\labsbi$-risk can be characterized as for any $x, x' \in \sX$,
\begin{align*}
\sC^*_{\labsbi}(\sH,x, x') 
&=  \sC_{\labsbi}\paren*{h^*,x, x'}\\
&= \min\curl*{\eta(x)(1 - \eta(x')),\eta(x')(1 - \eta(x))}\1_{\norm*{x - x'}  >  \gamma} +  c\, \1_{\abs*{x - x'} \leq \gamma}.
\end{align*}
which proves the first part of lemma. By the definition, for any $(x, x')$ such that $\norm*{x - x'}\leq\gamma$ and $h \in \sH$, $\Delta\sC_{\labsbi,\sH}(h, x, x') = \sC_{\labsbi}(h, x, x') - \sC^*_{\labsbi}(\sH,x, x') = 0$. . For any $(x, x')$ such that $\norm*{x - x'} > \gamma$ and $h \in \sH$,
\begin{align*}
\Delta\sC_{\labsbi,\sH}(h, x, x')
& = \sC_{\labsbi}(h, x, x') - \sC^*_{\labsbi}(\sH,x, x')\\
&  =  \eta(x)(1 - \eta(x')) \bracket*{\mathds{1}_{h(x) - h(x') < 0}
  + \frac{1}{2}\mathds{1}_{h(x) = h(x')}}\\
&  + \eta(x')(1 - \eta(x)) \bracket*{\mathds{1}_{h(x) - h(x') > 0}
  + \frac{1}{2}\mathds{1}_{h(x) = h(x')}}\\
& - \min\curl*{\eta(x)(1 - \eta(x')),\eta(x')(1 - \eta(x))}\\
&  =  
\begin{cases}
\abs*{\eta(x)(1 - \eta(x')) - \eta(x')(1 - \eta(x))},  & h  \in \wt\sH(x, x'), \\
\frac{1}{2}\abs*{\eta(x)(1 - \eta(x')) - \eta(x')(1 - \eta(x))},
& h  \in \mathring{\sH}(x, x'), \\
0, & \text{otherwise} .
\end{cases}\\
&  =  
\begin{cases}
\abs*{\eta(x) - \eta(x')},  & h  \in \wt\sH(x, x'), \\
\frac{1}{2}\abs*{\eta(x) - \eta(x')},  & h  \in \mathring{\sH}(x, x'), \\
0, & \text{otherwise} .
\end{cases}
\end{align*}
This leads to
\begin{align*}
\tri*{\Delta\sC_{\labsbi,\sH}(h, x, x')}_{\e}
&  = \tri*{\abs*{\eta(x) - \eta(x')}}_{\e}\mathds{1}_{h  \in \wt \sH(x, x')}\1_{\norm*{x - x'}  >  \gamma}\\
&\qquad + \tri*{\frac{1}{2}\abs*{\eta(x) - \eta(x')}}_{\e}\mathds{1}_{h  \in \mathring{\sH}(x, x')}\1_{\norm*{x - x'}  >  \gamma}.
\end{align*}
\end{proof}

\BiBound*
\begin{proof}
Since $\sH_{\mathrm{lin}}$ and $\sH_{\mathrm{NN}}$ satisfy the condition of Lemma~\ref{lemma:calibration-gap-abs-bi}, by Lemma~\ref{lemma:calibration-gap-abs-bi} the $\paren*{\labsbi,\sH_{\mathrm{lin}}}$-minimizability gap and the $\paren*{\labsbi,\sH_{\mathrm{NN}}}$-minimizability gap can be expressed as follows:
\begin{align*}
\sM_{\labsbi}(\sH_{\mathrm{lin}})  &=  \sR_{\labsbi}^*(\sH_{\mathrm{lin}})\\
&\qquad - \mathbb{E}_{(X, X')}\bracket*{\min\curl*{\eta(x)(1 - \eta(x')),\eta(x')(1 - \eta(x))}\1_{\norm*{x - x'}  >  \gamma} +  c\, \1_{\abs*{x - x'} \leq \gamma}}\\
\sM_{\labsbi}(\sH_{\mathrm{NN}})  &=  \sR_{\labsbi}^*(\sH_{\mathrm{NN}})\\
&\qquad - \mathbb{E}_{(X, X')}\bracket*{\min\curl*{\eta(x)(1 - \eta(x')),\eta(x')(1 - \eta(x))}\1_{\norm*{x - x'}  >  \gamma} +  c\, \1_{\abs*{x - x'} \leq \gamma}}.
\end{align*}
By the definition of $\sH_{\mathrm{lin}}$ and $\sH_{\mathrm{NN}}$, for any $(x, x')  \in \sX\times\sX$, $\curl[\big]{h(x') - h(x) \mid h  \in \sH_{\mathrm{lin}}}  =  \bracket*{-W \norm*{x - x'}_p, W\norm*{x - x'}_p}$ and $\curl[\big]{h(x') - h(x) \mid h  \in \sH_{\mathrm{NN}}}  =  \bracket*{-\Lambda W \norm*{x - x'}_p, \Lambda W\norm*{x - x'}_p}$. In the following, we will prove the bounds for $\sH_{\mathrm{lin}}$. Similar proofs with $B$ replaced by $\Lambda B$ hold for $\sH_{\mathrm{NN}}$.

\paragraph{Proof for \texorpdfstring{$\lbi_{\Phi_{\mathrm{hinge}}}$}{hinge}.} For the hinge loss function $\Phi_{\mathrm{hinge}}(u)\colon =
\max\curl*{0, 1 - u}$, for all $h \in \sH_{\mathrm{lin}}$ and $(x,
x')$ such that $\norm*{x - x'}_p > \gamma$,
\begin{equation*}
\begin{aligned}
& \sC_{\lbi_{\Phi_{\mathrm{hinge}}}}(h, x, x')\\
&  = \eta(x)(1 - \eta(x')) \Phi_{\mathrm{hinge}}(h(x) - h(x')) + \eta(x')(1 - \eta(x))\Phi_{\mathrm{hinge}}(h(x') - h(x))\\
&  = \eta(x)(1 - \eta(x'))\max\curl*{0, 1 - h(x) + h(x')} + \eta(x')(1 - \eta(x))\max\curl*{0, 1 + h(x) - h(x')}.
\end{aligned}
\end{equation*}
Then,
\begin{align*}
& \sC^*_{\lbi_{\Phi_{\mathrm{hinge}}},\sH_{\mathrm{lin}}}(x, x')\\
& =  \inf_{h \in\sH_{\mathrm{lin}}}\sC_{\lbi_{\Phi_{\mathrm{hinge}}}}(h, x, x')\\
&=  \eta(x)(1 - \eta(x')) + \eta(x')(1 - \eta(x)) - \abs*{\eta(x) - \eta(x')}\min\curl*{W\norm*{x - x'}_p, 1}.
\end{align*}
The $\paren*{\lbi_{\Phi_{\mathrm{hinge}}},\sH_{\mathrm{lin}}}$-minimizability gap is
\begin{equation*}
\begin{aligned}
& \sM_{\lbi_{\Phi_{\mathrm{hinge}}}}(\sH_{\mathrm{lin}})\\
&  =  \sR_{\lbi_{\Phi_{\mathrm{hinge}}}}^*(\sH_{\mathrm{lin}}) - \mathbb{E}_{(X, X')}\bracket*{\sC^*_{\lbi_{\Phi_{\mathrm{hinge}}},\sH_{\mathrm{lin}}}(x, x')}\\
&  =  \sR_{\lbi_{\Phi_{\mathrm{hinge}}}}^*(\sH_{\mathrm{lin}})\\
& \qquad - \mathbb{E}_{(X, X')}\bracket*{\eta(x)(1 - \eta(x')) + \eta(x')(1 - \eta(x)) - \abs*{\eta(x) - \eta(x')}\min\curl*{W\norm*{x - x'}_p, 1}}.
\end{aligned}
\end{equation*}
Therefore, $\forall h  \in \wt\sH_{\mathrm{lin}}(x, x')\bigcup \mathring{\sH}_{\mathrm{lin}}(x, x')$,
\begin{align*} & \Delta\sC_{\lbi_{\Phi_{\mathrm{hinge}}},\sH_{\mathrm{lin}}}(h, x, x')\\
& \geq  \inf_{h \in\wt\sH_{\mathrm{lin}}(x, x')\bigcup \mathring{\sH}_{\mathrm{lin}}(x, x')}\sC_{\lbi_{\Phi_{\mathrm{hinge}}}}(h, x, x') - \sC^*_{\lbi_{\Phi_{\mathrm{hinge}}},\sH_{\mathrm{lin}}}(x, x')\\
&  =  \eta(x)(1 - \eta(x'))\max\curl*{0, 1 - 0} + \eta(x')(1 - \eta(x))\max\curl*{0, 1 + 0} - \sC^*_{\lbi_{\Phi_{\mathrm{hinge}}},\sH_{\mathrm{lin}}}(x, x')\\
&  = \abs*{\eta(x) - \eta(x')}\min\curl*{W\norm*{x - x'}_p, 1}\\
& \geq \abs*{\eta(x) - \eta(x')}\min\curl*{W\gamma, 1}
\end{align*}
which implies that for any $h \in \sH_{\mathrm{lin}}$ and $(x, x')$ such that $\norm*{x - x'}_p > \gamma$,
\begin{align*}
\Delta\sC_{\lbi_{\Phi_{\mathrm{hinge}}},\sH_{\mathrm{lin}}}(h, x, x') \geq \min\curl*{W\gamma, 1}\Delta\sC_{\labsbi,\sH}(h, x, x').
\end{align*}
Thus, by Theorem~\ref{Thm:bound_Psi} or Theorem~\ref{Thm:bound_Gamma}, setting $\e  =  0$ yields the $\sH_{\mathrm{lin}}$-consistency bound for $\lbi_{\Phi_{\mathrm{hinge}}}$, valid for all $h  \in \sH_{\mathrm{lin}}$:
\begin{align*}
     \sR_{\labsbi}(h) -  \sR_{\labsbi}^*(\sH_{\mathrm{lin}})
     \leq \frac{\sR_{\lbi_{\Phi_{\mathrm{hinge}}}}(h) -  \sR_{\lbi_{\Phi_{\mathrm{hinge}}}}^*(\sH_{\mathrm{lin}}) + \sM_{\lbi_{\Phi_{\mathrm{hinge}}}}(\sH_{\mathrm{lin}})}{\min\curl*{W\gamma, 1}} - \sM_{\labsbi}(\sH_{\mathrm{lin}}).
\end{align*}

\paragraph{Proof for \texorpdfstring{$\lbi_{\Phi_{\mathrm{exp}}}$}{exp}.} For the exponential loss function $\Phi_{\mathrm{exp}}(u)\colon =
e^{-u}$, for all $h \in \sH_{\mathrm{lin}}$ and $(x, x')$ such that
$\norm*{x - x'}_p > \gamma$,
\begin{equation*}
\begin{aligned}
& \sC_{\lbi_{\Phi_{\mathrm{exp}}}}(h, x, x')\\
&  = \eta(x)(1 - \eta(x')) \Phi_{\mathrm{exp}}(h(x) - h(x')) + \eta(x')(1 - \eta(x))\Phi_{\mathrm{exp}}(h(x') - h(x))\\
&  = \eta(x)(1 - \eta(x'))e^{-h(x) + h(x')} + \eta(x')(1 - \eta(x))e^{h(x) - h(x')}.
\end{aligned}
\end{equation*}
Then,
\begin{align*}
\sC^*_{\lbi_{\Phi_{\mathrm{exp}}},\sH_{\mathrm{lin}}}(x, x')
& =  \inf_{h \in\sH_{\mathrm{lin}}}\sC_{\lbi_{\Phi_{\mathrm{exp}}}}(h, x, x')\\
& =  
\begin{cases}
2\sqrt{\eta(x)\eta(x')(1 - \eta(x))(1 - \eta(x'))} \\
\text{if }\frac{1}{2}\abs*{\log\frac{\eta(x)(1 - \eta(x'))}{\eta(x')(1 - \eta(x))}}\leq W\norm*{x - x'}_p\\
\max\curl*{\eta(x)(1 - \eta(x')),\eta(x')(1 - \eta(x))}e^{-W\norm*{x - x'}_p}\\
\qquad + \min\curl*{\eta(x)(1 - \eta(x')),\eta(x')(1 - \eta(x))}e^{W\norm*{x - x'}_p} &\\
\text{if } \frac{1}{2}\abs*{\log\frac{\eta(x)(1 - \eta(x'))}{\eta(x')(1 - \eta(x))}} >  W\norm*{x - x'}_p.
\end{cases}
\end{align*}
The $\paren*{\lbi_{\Phi_{\mathrm{exp}}},\sH_{\mathrm{lin}}}$-minimizability gap is:
\begin{equation*}
\begin{aligned}
& \sM_{\lbi_{\Phi_{\mathrm{exp}}}}(\sH_{\mathrm{lin}})\\
&  =  \sR_{\lbi_{\Phi_{\mathrm{exp}}}}^*(\sH_{\mathrm{lin}}) - 
\mathbb{E}_{(X, X')}\bracket*{\sC^*_{\lbi_{\Phi_{\mathrm{exp}}}, \sH_{\mathrm{lin}}}(x, x')}\\
&  =  \sR_{\lbi_{\Phi_{\mathrm{exp}}}}^*(\sH_{\mathrm{lin}}) - 
\mathbb{E}_{(X, X')}\bracket*{2\sqrt{\eta(x)\eta(x')(1 - \eta(x))(1 - \eta(x'))}\1_{\frac{1}{2}\abs*{\log\frac{\eta(x)(1 - \eta(x'))}{\eta(x')(1 - \eta(x))}}\leq W\norm*{x - x'}_p}}\\
& \qquad - \mathbb{E}_{(X, X')}\bracket*{\bracket*{\max\curl*{\eta(x),\eta(x')} - \eta(x)\eta(x')}e^{-W\norm*{x - x'}_p}\1_{\frac{1}{2}\abs*{\log\frac{\eta(x)(1 - \eta(x'))}{\eta(x')(1 - \eta(x))}} >  W\norm*{x - x'}_p}}\\
& \quad \qquad - \mathbb{E}_{(X, X')}\bracket*{\bracket*{\min\curl*{\eta(x),\eta(x')} - \eta(x)\eta(x')}e^{W\norm*{x - x'}_p}\1_{\frac{1}{2}\abs*{\log\frac{\eta(x)(1 - \eta(x'))}{\eta(x')(1 - \eta(x))}} >  W\norm*{x - x'}_p}}.
\end{aligned}
\end{equation*}
Therefore, $\forall h  \in \wt\sH_{\mathrm{lin}}(x, x')\bigcup \mathring{\sH}_{\mathrm{lin}}(x, x')$,
\begin{align*} & \Delta\sC_{\lbi_{\Phi_{\mathrm{exp}}},\sH_{\mathrm{lin}}}(h, x, x')\\
& \geq  \inf_{h \in\wt\sH_{\mathrm{lin}}(x, x')\bigcup \mathring{\sH}_{\mathrm{lin}}(x, x')}\sC_{\lbi_{\Phi_{\mathrm{exp}}}}(h, x, x') - \sC^*_{\lbi_{\Phi_{\mathrm{exp}}},\sH_{\mathrm{lin}}}(x, x')\\
&  =  \eta(x)(1 - \eta(x'))e^{-0} + \eta(x')(1 - \eta(x))e^{0} - \sC^*_{\lbi_{\Phi_{\mathrm{exp}}},\sH_{\mathrm{lin}}}(x, x')\\
&  =  
\begin{cases}
\eta(x)(1 - \eta(x')) + \eta(x')(1 - \eta(x)) -  2\sqrt{\eta(x)\eta(x')(1 - \eta(x))(1 - \eta(x'))} \\
\text{if } \frac{1}{2}\abs*{\log\frac{\eta(x)(1 - \eta(x'))}{\eta(x')(1 - \eta(x))}}\leq W\norm*{x - x'}_p\\
\bracket*{\max\curl*{\eta(x),\eta(x')} - \eta(x)\eta(x')}\paren*{1 - e^{-W\norm*{x - x'}_p}}\\
\qquad + \bracket*{\min\curl*{\eta(x),\eta(x')} - \eta(x)\eta(x')}\paren*{1 - e^{W\norm*{x - x'}_p}} \\
\text{if } \frac{1}{2}\abs*{\log\frac{\eta(x)(1 - \eta(x'))}{\eta(x')(1 - \eta(x))}} >  W\norm*{x - x'}_p
\end{cases}\\
& \geq
\begin{cases}
\eta(x)(1 - \eta(x')) + \eta(x')(1 - \eta(x)) -  2\sqrt{\eta(x)\eta(x')(1 - \eta(x))(1 - \eta(x'))} \\
\text{if } \frac{1}{2}\abs*{\log\frac{\eta(x)(1 - \eta(x'))}{\eta(x')(1 - \eta(x))}}\leq W\gamma\\
\bracket*{\max\curl*{\eta(x),\eta(x')} - \eta(x)\eta(x')}\paren*{1 - e^{-W\gamma}}\\
\qquad + \bracket*{\min\curl*{\eta(x),\eta(x')} - \eta(x)\eta(x')}\paren*{1 - e^{W\gamma}} \\
\text{if } \frac{1}{2}\abs*{\log\frac{\eta(x)(1 - \eta(x'))}{\eta(x')(1 - \eta(x))}} >  W\gamma
\end{cases}\\
&  = 
\begin{cases}
\paren*{\frac{\eta(x)(1 - \eta(x')) - \eta(x')(1 - \eta(x))}{\sqrt{\eta(x)(1 - \eta(x'))} + \sqrt{\eta(x')(1 - \eta(x))}}}^2 \\\text{if } \frac{1}{2}\abs*{\log\frac{\eta(x)(1 - \eta(x'))}{\eta(x')(1 - \eta(x))}}\leq W\gamma\\
\frac{\eta(x)(1 - \eta(x')) + \eta(x')(1 - \eta(x))}{2}(2 - e^{-W\gamma} - e^{W\gamma}) +  \frac{1}{2}\abs*{\eta(x) - \eta(x')}\paren*{e^{W\gamma} - e^{-W\gamma}}
 \\\text{if } \frac{1}{2}\abs*{\log\frac{\eta(x)(1 - \eta(x'))}{\eta(x')(1 - \eta(x))}} >  W\gamma
\end{cases}\\
& \geq
\min\curl*{\paren*{\eta(x) - \eta(x')}^2,\paren*{\frac{e^{2W\gamma} + 1}{e^{2W\gamma} - 1}}\, \abs*{\eta(x) - \eta(x')}}
\end{align*}
which implies that for any $h \in \sH_{\mathrm{lin}}$ and $(x, x')$ such that $\norm*{x - x'}_p > \gamma$,
\begin{align*}
\Delta\sC_{\lbi_{\Phi_{\mathrm{exp}}},\sH_{\mathrm{lin}}}(h, x, x') \geq \Psi_{\rm{exp}}\paren*{\Delta\sC_{\labsbi,\sH}(h, x, x')}.
\end{align*}
where $\Psi_{\rm{exp}}$ is the increasing function on $[0,2]$ defined by
\begin{align*}
\forall t \in[0, 1], \quad 
\Psi_{\rm{exp}}(t) = \min\curl*{t^2,\paren*{\frac{e^{2W\gamma} + 1}{e^{2W\gamma} - 1}}\, t}.
\end{align*}
Thus, by Theorem~\ref{Thm:bound_Psi} or Theorem~\ref{Thm:bound_Gamma}, setting $\e  =  0$ yields the $\sH_{\mathrm{lin}}$-consistency bound for $\lbi_{\Phi_{\mathrm{exp}}}$, valid for all $h  \in \sH_{\mathrm{lin}}$:
\begin{align*}
     \sR_{\labsbi}(h) -  \sR_{\labsbi}^*(\sH_{\mathrm{lin}})
     \leq  \Gamma_{\Phi_{\mathrm{exp}}}\paren*{\sR_{\lbi_{\Phi_{\mathrm{exp}}}}(h) -  \sR_{\lbi_{\Phi_{\mathrm{exp}}}}^*(\sH_{\mathrm{lin}}) + \sM_{\lbi_{\Phi_{\mathrm{exp}}}}(\sH_{\mathrm{lin}})} - \sM_{\labsbi}(\sH_{\mathrm{lin}}).
\end{align*}
where $\Gamma_{\Phi_{\mathrm{exp}}}(t) = \max\curl*{\sqrt{t},\paren*{\frac{e^{2W\gamma} - 1}{e^{2W\gamma} + 1}}\, t}$.

\paragraph{Proof for \texorpdfstring{$\lbi_{\Phi_{\mathrm{sig}}}$}{sig}.} For the sigmoid loss function $\Phi_{\mathrm{sig}}(u)\colon = 1 -
\tanh(k u),~k > 0$, for all $h \in \sH_{\mathrm{lin}}$ and $(x, x')$
such that $\norm*{x - x'}_p > \gamma$,
\begin{equation*}
\begin{aligned}
& \sC_{\lbi_{\Phi_{\mathrm{sig}}}}(h, x, x')\\
&  = \eta(x)(1 - \eta(x')) \Phi_{\mathrm{sig}}(h(x) - h(x')) + \eta(x')(1 - \eta(x))\Phi_{\mathrm{sig}}(h(x') - h(x))\\
&  = \eta(x)(1 - \eta(x'))\paren*{1 - \tanh\paren*{k\bracket*{h(x) - h(x')}}} + \eta(x')(1 - \eta(x))\paren*{1 + \tanh\paren*{k\bracket*{h(x) - h(x')}}}
\end{aligned}
\end{equation*}
Then,
\begin{align*}
&\sC^*_{\lbi_{\Phi_{\mathrm{sig}}},\sH_{\mathrm{lin}}}(x, x')\\
&=  \inf_{h \in\sH_{\mathrm{lin}}}\sC_{\lbi_{\Phi_{\mathrm{sig}}}}(h, x, x')\\
&=  \eta(x)(1 - \eta(x')) + \eta(x')(1 - \eta(x)) - \abs*{\eta(x) - \eta(x')}\tanh\paren*{kW\norm*{x - x'}_p}.
\end{align*}
The $\paren*{\lbi_{\Phi_{\mathrm{sig}}},\sH_{\mathrm{lin}}}$-minimizability gap is
\begin{equation*}
\begin{aligned}
&\sM_{\lbi_{\Phi_{\mathrm{sig}}}}(\sH_{\mathrm{lin}})\\
&=  \sR_{\lbi_{\Phi_{\mathrm{sig}}}}^*(\sH_{\mathrm{lin}})\\
&\qquad - \mathbb{E}_{(X, X')}\bracket*{\eta(x)(1 - \eta(x')) + \eta(x')(1 - \eta(x)) - \abs*{\eta(x) - \eta(x')}\tanh\paren*{kW\norm*{x - x'}_p}}.
\end{aligned}
\end{equation*}
Therefore, $\forall h  \in \wt\sH_{\mathrm{lin}}(x, x')\bigcup \mathring{\sH}_{\mathrm{lin}}(x, x')$,
\begin{align*} & \Delta\sC_{\lbi_{\Phi_{\mathrm{sig}}},\sH_{\mathrm{lin}}}(h, x, x')\\
& \geq  \inf_{h \in\wt\sH_{\mathrm{lin}}(x, x')\bigcup \mathring{\sH}_{\mathrm{lin}}(x, x')}\sC_{\lbi_{\Phi_{\mathrm{sig}}}}(h, x, x') - \sC^*_{\lbi_{\Phi_{\mathrm{sig}}},\sH_{\mathrm{lin}}}(x, x')\\
&  =  \eta(x)(1 - \eta(x')) + \eta(x')(1 - \eta(x)) - \sC^*_{\lbi_{\Phi_{\mathrm{sig}}},\sH_{\mathrm{lin}}}(x, x')\\
&  = \abs*{\eta(x) - \eta(x')}\tanh\paren*{kW\norm*{x - x'}_p}\\
& \geq \abs*{\eta(x) - \eta(x')}\tanh\paren*{kW\gamma}
\end{align*}
which implies that for any $h \in \sH_{\mathrm{lin}}$ and $(x, x')$ such that $\norm*{x - x'}_p > \gamma$,
\begin{align*}
\Delta\sC_{\lbi_{\Phi_{\mathrm{sig}}},\sH_{\mathrm{lin}}}(h, x, x') \geq \tanh\paren*{kW\gamma}\Delta\sC_{\labsbi,\sH}(h, x, x').
\end{align*}
Thus, by Theorem~\ref{Thm:bound_Psi} or Theorem~\ref{Thm:bound_Gamma}, setting $\e  =  0$ yields the $\sH_{\mathrm{lin}}$-consistency bound for $\lbi_{\Phi_{\mathrm{sig}}}$, valid for all $h  \in \sH_{\mathrm{lin}}$:
\begin{align*}
     \sR_{\labsbi}(h) -  \sR_{\labsbi}^*(\sH_{\mathrm{lin}})
     \leq \frac{\sR_{\lbi_{\Phi_{\mathrm{sig}}}}(h) -  \sR_{\lbi_{\Phi_{\mathrm{sig}}}}^*(\sH_{\mathrm{lin}}) + \sM_{\lbi_{\Phi_{\mathrm{sig}}}}(\sH_{\mathrm{lin}})}{\tanh\paren*{kW\gamma}} - \sM_{\labsbi}(\sH_{\mathrm{lin}}).
\end{align*}
\end{proof}

\section{Negative results for general pairwise
  ranking (Proof of Theorem~\ref{Thm:negative-general})}
\label{app:general-negative}

\Negative*
\begin{proof}
Assume $x_0 \in \sX$ is an interior point and $h_0 = 0 \in \sH$. By the
assumption that $x_0$ is an interior point and $\sH$ is equicontinuous
at $x_0$, for any $\e > 0$, we are able to take $x' \neq x_0 \in \sX$ such
that $\abs*{h(x') - h(x_0)} <  \e$ for all $h \in \sH$. Consider the
distribution that supports on $\curl*{(x_0,x')}$ with
$\eta(x_0,x') = 0$. Then, for any $h \in \sH$,
\begin{align*}
\sR_{\lrank}(h) = \sC_{\lrank}(h, x_0,x')
 =  \mathds{1}_{h(x')\geq h(x_0)} \geq 0,
\end{align*}
where the equality can be achieved for some $h \in \sH$ since $\sH$ is regular for general pairwise ranking. Therefore, 
\begin{align*}
\sR_{\lrank}^*(\sH) = \sC^*_{\lrank}(\sH,x_0,x') =  \inf_{h \in \sH}\sC_{\lrank}(h, x_0,x') = 0.
\end{align*}
Note $\sR_{\lrank}(h_0) = 1$. 
For the surrogate loss $\sfL_{\Phi}$, for any $h \in \sH$,
\begin{align*}
\sR_{\sfL_{\Phi}}(h) = \sC_{\sfL_{\Phi}}(h, x_0,x')
 = \Phi\paren*{h(x_0) - h(x')} \in \bracket*{\Phi(\e),\Phi(-\e)}
\end{align*}
since $\abs*{h(x') - h(x_0)} <  \e$ and $\Phi$ is non-increasing. Therefore,
\begin{align*}
\sR_{\sfL_{\Phi}}^*(\sH) = \sC^*_{\sfL_{\Phi}}(\sH,x_0,x')\geq \Phi(\e).
\end{align*}
Note $\sR_{\sfL_{\Phi}}(h_0) = \Phi(0)$. If for some function $f$ that
is non-decreasing and continuous at $0$, the bound holds, then, we
obtain for any $h \in \sH$ and $\e > 0$,
\begin{align*}
\sR_{\lrank}(h) - 0\leq  f\paren*{\sR_{\sfL_{\Phi}}(h) - \sR_{\sfL_{\Phi}}^*(\sH)}\leq f\paren*{\sR_{\sfL_{\Phi}}(h) - \Phi(\e)}.
\end{align*}
Let $h = h_0$, then $f\paren*{\Phi(0) - \Phi(\e)}\geq 1$ for any
$\e > 0$. Take $\e\to 0$, we obtain $f(0)\geq 1$ using the fact that
$\Phi$ and $f$ are both continuous at $0$. Since $f$ is
non-decreasing, for any $t \in [0, 1]$, $f(t)\geq 1$.
\end{proof}

\section{Negative results for bipartite ranking (Proof of Theorem~\ref{Thm:negative-bi})}
\label{app:negative-bi}

\NegativeBi*
\begin{proof}
Assume $x_0 \in \sX$ is an interior point and $h_0 = 0 \in \sH$. By the
assumption that $x_0$ is an interior point and $\sH$ is equicontinuous
at $x_0$, for any $\e > 0$, we are able to take $x' \neq x_0 \in \sX$ such
that $\abs*{h(x') - h(x_0)} <  \e$ for all $h \in \sH$. Consider the
distribution that supports on $\curl*{x_0, x'}$ with $\eta(x_0) = 1$ and
$\eta(x') = 0$. Then, for any $h \in \sH$,
\begin{align*}
\sR_{\lrankbi}(h) = \sC_{\lrankbi}(h, x_0,x')
 =  \mathds{1}_{h(x_0) <  h(x')} + \frac{1}{2}\mathds{1}_{h(x_0) =  h(x')} \geq 0,
\end{align*}
where the equality can be achieved for some $h \in \sH$ since $\sH$ is regular for bipartite ranking. Therefore, 
\begin{align*}
\sR_{\lrank}^*(\sH) = \sC^*_{\lrank}(\sH,x_0,x') =  \inf_{h \in \sH}\sC_{\lrank}(h, x_0,x') = 0.
\end{align*}
Note $\sR_{\lrank}(h_0) = \frac{1}{2}$. 
For the surrogate loss $\sfL_{\Phi}$, for any $h \in \sH$,
\begin{align*}
\sR_{\sfL_{\Phi}}(h) = \sC_{\sfL_{\Phi}}(h, x_0,x')
 = \Phi\paren*{h(x_0) - h(x')} \in \bracket*{\Phi(\e),\Phi(-\e)}
\end{align*}
since $\abs*{h(x') - h(x_0)} <  \e$ and $\Phi$ is non-increasing. Therefore,
\begin{align*}
\sR_{\sfL_{\Phi}}^*(\sH) = \sC^*_{\sfL_{\Phi}}(\sH,x_0,x')\geq \Phi(\e).
\end{align*}
Note $\sR_{\sfL_{\Phi}}(h_0) = \Phi(0)$. If for some function $f$ that
is non-decreasing and continuous at $0$, the bound holds, then, we
obtain for any $h \in \sH$ and $\e > 0$,
\begin{align*}
  \sR_{\lrank}(h) - 0
  \leq  f\paren*{\sR_{\sfL_{\Phi}}(h) - \sR_{\sfL_{\Phi}}^*(\sH)}\leq f\paren*{\sR_{\sfL_{\Phi}}(h) - \Phi(\e)}.
\end{align*}
Let $h = h_0$, then $f\paren*{\Phi(0) - \Phi(\e)}\geq \frac{1}{2}$ for any
$\e > 0$. Take $\e\to 0$, we obtain $f(0)\geq \frac{1}{2}$ using the fact
that $\Phi$ and $f$ are both continuous at $0$. Since $f$ is
non-decreasing, for any $t \in [0, 1]$, $f(t)\geq \frac{1}{2}$.
\end{proof}

\end{document}